%

\documentclass[aos,MSNbibl,seceqn,noautosecdot,dvips]{arximspdf}
\usepackage{algorithmic,algorithm}
\usepackage{stfloats}
\usepackage{graphicx}
\usepackage{url,breakurl}

%

\doi{10.1214/13-AOS1199} 
\volume{42}
\issue{2}
\pubyear{2014}
\firstpage{669}
\lastpage{699}

\makeatletter
  \fnbelowfloat
\newcommand{\rrVert}{\Vert}
\newcommand{\llVert}{\Vert}
\newtheorem{theorem}{Theorem}[section]
\newproclaim{definition}[theorem]{Definition}

\makeatother

\begin{document}
\begin{frontmatter}

\title{Robust subspace clustering}
\runtitle{Robust subspace clustering}

\begin{aug}
\author[A]{\fnms{Mahdi} \snm{Soltanolkotabi}\ead[label=e1]{mahdisol@stanford.edu}\thanksref{t1}},
\author[B]{\fnms{Ehsan} \snm{Elhamifar}\ead[label=e2]{ehsan@eecs.berkeley.edu}}\\
\and
\author[C]{\fnms{Emmanuel J.} \snm{Cand\`{e}s}\corref{}\ead[label=e3]{candes@stanford.edu}\ead[label=u1,url]{http://www.foo.com}\thanksref{t3}}
\runauthor{M. Soltanolkotabi, E. Elhamifar and E. J. Cand\`{e}s}
\affiliation{Stanford University, University of California, Berkeley
and\break  Stanford University}
\address[A]{M. Soltanolkotabi\\
Department of Electrical Engineering\\
Stanford University\\
350 Serra Mall\\
Stanford, California 94305\\
USA\\
\printead{e1}} 
\address[B]{E. Elhamifar\\
EECS Department\\
Trust Center Room 337\\
Cory Hall Engineering Department\\
University of California\\
Berkeley, California 94720-1774\\
USA\\
\printead{e2}}
\address[C]{E. J. Cand\`{e}s\\
Department of Statistics\\
Stanford University\\
390 Serra Mall\\
Stanford, California 94305\\
USA\\
\printead{e3}}
\end{aug}
\thankstext{t1}{Supported by a Benchmark Stanford Graduate Fellowship.}
\thankstext{t3}{Supported in part by AFOSR under Grant FA9550-09-1-0643 and by ONR under Grant N00014-09-1-0258 and by a gift from the Broadcom Foundation.}

\received{\smonth{2} \syear{2013}}
\revised{\smonth{12} \syear{2013}}

%
\begin{abstract}
Subspace clustering refers to the task of finding a multi-subspace
representation that best fits a collection of points taken from a
high-dimensional space. This paper introduces an algorithm inspired
by sparse subspace clustering (SSC) [In \textit{IEEE Conference on Computer Vision and Pattern
Recognition}, \textit{CVPR} (2009) 2790--2797] to cluster noisy
data, and develops some novel theory demonstrating its
correctness. In particular, the theory uses ideas from geometric
functional analysis to show that the algorithm can accurately
recover the underlying subspaces under minimal requirements on their
orientation, and on the number of samples per subspace. Synthetic as
well as real data experiments complement our theoretical study,
illustrating our approach and demonstrating its effectiveness.
\end{abstract}

%
\begin{keyword}[class=AMS]
\kwd{62-07}
\end{keyword}
\begin{keyword}
\kwd{Subspace clustering}
\kwd{spectral clustering}
\kwd{LASSO}
\kwd{Dantzig selector}
\kwd{$\ell_1$~minimization}
\kwd{multiple hypothesis testing}
\kwd{true and false discoveries}
\kwd{geometric functional analysis}
\kwd{nonasymptotic random matrix theory}
\end{keyword}

\end{frontmatter}

\setcounter{footnote}{2}
\section{\texorpdfstring{Introduction.}{Introduction}}
\label{secintro}
\subsection{\texorpdfstring{Motivation.}{Motivation}}

In many problems across science and engineering, a fundamental step is
to find a lower-dimensional subspace which best fits a collection of
points taken from a high-dimensional space; this is classically
achieved via Principal Component Analysis (PCA). Such a procedure
makes perfect sense as long as the data points are distributed around
a lower-dimensional subspace, or expressed differently, as long as the
data matrix with points as column vectors has approximately low
rank. A more general model might sometimes be useful when the data
come from a mixture model in which points do not lie around a single
lower-dimensional subspace but rather around a union of
low-dimensional subspaces. For instance, consider an experiment in
which gene expression data are gathered on many cancer cell lines with
unknown subsets belonging to different tumor types. One can imagine
that the expressions from each cancer type may span a distinct lower-dimensional subspace. If the cancer labels were known in advance, one
would apply PCA separately to each group but we here consider the case
where the observations are unlabeled. Thus, the goal in such an example
would be to separate gene expression patterns into different cancer
types if possible. Finding the components of the mixture and
assigning each point to a fitted subspace is called subspace
clustering. Even when the mixture model holds, the full data matrix
may not have low rank at all, a situation which is very different from
that where PCA is applicable.


In recent years, numerous algorithms have been developed for subspace
clustering and applied to various problems in computer vision/machine
learning \cite{vidaltutorial} and data mining
\cite{parsons2004subspace}. At the time of this writing, subspace
clustering techniques are certainly gaining momentum as they begin to
be used in fields as diverse as identification and classification of
diseases \cite{montana}, network topology inference \cite{highrankMC},
security and privacy in recommender systems \cite{montanariSC}, system
identification \cite{sysID}, hyper-spectral imaging
\cite{hyperspectral}, identification of switched linear systems
\cite{masysid,ozay}, and music analysis \cite{music} to name just a
few. In spite of all these interesting works, tractable subspace
clustering algorithms either lack a theoretical justification, or are
guaranteed to work under restrictive conditions rarely met in
practice. (We note that although novel and often efficient clustering
techniques come about all the time, establishing rigorous theory for
such techniques has proven to be quite difficult. In the context of
subspace clustering, Section~\ref{comp} offers a partial survey of the
existing literature.) Furthermore, proposed algorithms are not always
computationally tractable. Thus, one important issue is whether
tractable algorithms that can (provably) work in less than ideal
situations---that is, under severe noise conditions and relatively few
samples per subspace---exist.

Elhamifar and Vidal \cite{ehsanSSC} have introduced an approach to
subspace clustering, which relies on ideas from the sparsity and
compressed sensing literature, please see also the longer version
\cite{SSCalg} which was submitted while this manuscript was under
preparation. \emph{Sparse subspace clustering} (SSC)
\cite{ehsanSSC,SSCalg} is computationally efficient since it amounts
to solving a sequence of $\ell_1$ minimization problems and is,
therefore, tractable. Now the methodology in \cite{ehsanSSC} is mainly
geared toward noiseless situations where the points lie exactly on
lower-dimensional planes, and theoretical performance guarantees in
such circumstances are given under restrictive assumptions. Continuing
on this line of work, \cite{ourSSC} showed that good theoretical
performance could be achieved under broad circumstances. However, the
model supporting the theory in \cite{ourSSC} is still noise free.

This paper considers the subspace clustering problem in the presence
of noise. We introduce a tractable clustering algorithm, which is a
natural extension of~SSC, and develop rigorous theory about its
performance; see the results from Section~\ref{mainresults}. In a
nutshell, we propose a statistical mixture model to represent data
lying near a union of subspaces, and prove that in this model, the
algorithm is effective in separating points from different subspaces
as long as there are sufficiently many samples from each subspace and
that the subspaces are not too close to each other. In this theory,
the performance of the algorithm is explained in terms of
interpretable and intuitive parameters such as (1) the values of the
principal angles between subspaces, (2) the number of points per
subspace, (3) the noise level and so on. In terms of these parameters,
our theoretical results indicate that the performance of the algorithm
is in some sense near the limit of what can be achieved by any
algorithm, regardless of tractability.

\subsection{\texorpdfstring{Problem formulation and model.}{Problem formulation and model}}\label{model}

We assume we are given data points lying near a union of unknown
linear subspaces; there are $L$ subspaces $S_1,S_2,\ldots,S_L$ of
$\mathbb{R}^n$ of dimensions $d_1,d_2,\ldots,d_L$. These together
with their
number are completely unknown to us. We are given a point set
$\mathcal{Y}\subset\mathbb{R}^n$ of cardinality $N$, which may be partitioned
as
$\mathcal{Y}=\mathcal{Y}_1\cup\mathcal{Y}_2\cup\cdots\cup
\mathcal{Y}_L$;
for each $\ell\in\{1, \ldots, L\}$, $\mathcal{Y}_\ell$ is a
collection of $N_\ell$ vectors that are ``close'' to subspace
$S_\ell$. The goal is to approximate the underlying subspaces using
the point set $\mathcal{Y}$. One approach is first to assign each data
point to a cluster, and then estimate the subspaces representing each
of the groups with PCA.

Our statistical model assumes that each point $\mathbf{y}\in\mathcal{Y}$ is
of the form
%
%
\begin{equation}
\label{eqmodel} \mathbf{y}= \mathbf{x} + \mathbf{z},
\end{equation}
where $\mathbf{x}$ belongs to one of the subspaces and $\mathbf{z}$ is an independent
stochastic noise term. We suppose that the inverse signal-to-noise
ratio (SNR) defined as $\operatorname{\mathbb{E}}\|\mathbf{z}\|
_2^2/\|
\mathbf{x}\|_{\ell_2}^2$ is
bounded above. Each observation is thus the superposition of a
noiseless sample taken from one of the subspaces and of a stochastic
perturbation whose Euclidean norm is about $\sigma$ times the signal
strength so that $\operatorname{\mathbb{E}}\|z\|_{\ell_2}^2 = \sigma
^2 \|x\|_{\ell_2}^2$.
All the way through, we assume that
%
%
\begin{equation}
\label{eqsimplicity} \sigma< \sigma^\star\quad\mbox{and}\quad\max
_\ell{d_\ell}< c_0 \frac{n}{{(\log N)}^2},
\end{equation}
where $\sigma^\star< 1$ and $c_0$ are fixed numerical constants. To
remove any ambiguity, $\sigma$~is the noise level and $\sigma^\star$
the maximum value it can take on. The second assumption is here to
avoid unnecessarily complicated expressions later on. While more
substantial, the first is not too restrictive since it just says that
the signal $\mathbf{x}$ and the noise $\mathbf{z}$ may have about the same
magnitude. (With an arbitrary perturbation of Euclidean norm equal to
two, one can move from any point $\mathbf{x}$ on the unit sphere to just about
any other point.)

This is arguably the simplest model providing a good starting point
for a theoretical investigation. For the noiseless samples $\mathbf{x}$, we
consider the intuitive \emph{semirandom model} introduced in
\cite{ourSSC}, which assumes that the subspaces are fixed with points
distributed uniformly at random on each subspace. One can think of
this as a mixture model where each component in the mixture is a lower-dimensional subspace. (One can extend the methods to affine subspace
clustering as briefly explained in Section~\ref{secmethod}.) %

\subsection{\texorpdfstring{What makes clustering hard?}{What makes clustering hard}}

Two important parameters fundamentally affect the performance of
subspace clustering algorithms: (1) the distance between subspaces and
(2) the number of samples on each subspace.

\subsubsection{\texorpdfstring{Distance/affinity between subspaces.}{Distance/affinity between subspaces}}

Intuitively, any subspace clustering algorithm operating on noisy data
will have difficulty segmenting observations when the subspaces are
close to each other. We of course need to quantify closeness, and
Definition~\ref{defaffinity} captures a notion of distance or
similarity/affinity between subspaces.
%
%
\begin{definition}
The principal angles $\theta^{(1)},\ldots,\theta^{(d \wedge d')}$
between two subspaces $S$ and $S'$ of dimensions $d$ and $d'$, are
recursively defined by
\[
\cos\bigl(\theta^{(i)}\bigr) = \max_{\mathbf{u}_i\in S}\max
_{\mathbf{v}_i
\in S'} \frac{\mathbf{u}_i^T\mathbf{v}_i}{\llVert \mathbf{u}_i\rrVert
_{\ell
_2}\llVert \mathbf{v}_i\rrVert _{\ell_2}}
\]
with the orthogonality constraints $\mathbf{u}_i^T\mathbf{u}_j = 0$,
$\mathbf{v}_i^T\mathbf{v}_j=0$, $j=1,\ldots,i-1$.
\end{definition}
Alternatively, if the columns of $\mathbf{U}$ and $\mathbf{V}$ are
orthobases for $S$ and $S'$, then the cosine of the principal angles
are the singular values of $\mathbf{U}^T\mathbf{V}$.

%
\begin{definition}
\label{defaffinity}
The normalized affinity between two subspaces is defined by
\[
\operatorname{aff}\bigl(S,S'\bigr)=\sqrt{\frac{\cos^2\theta^{(1)} + \cdots+\cos
^2\theta^{(d
\wedge d')}}{d\wedge d'}}.
\]
\end{definition}
The affinity is a measure of correlation between subspaces. It is low
when the principal angles are nearly right angles (it vanishes when
the two subspaces are orthogonal) and high when the principal angles
are small (it takes on its maximum value equal to one when one
subspace is contained in the other). Hence, when the affinity is high,
clustering is hard whereas it becomes easier as the affinity
decreases. Ideally, we would like our algorithm to be able to handle
higher affinity values---as close as possible to the maximum possible
value.

There is a statistical description of the affinity which goes as
follows: sample independently two unit-normed vectors $\mathbf{x}$ and
$\mathbf{y}$ uniformly at random from $S$~and~$S'$. Then
\[
\operatorname{\mathbb{E}} \bigl\{\bigl(\mathbf{x}^T \mathbf{y}
\bigr)^2 \bigr\} \propto \bigl\{\operatorname{aff}\bigl(S,S'
\bigr) \bigr\}^2,
\]
where the constant of proportionality is $d \vee d'$. Having said
this, there are of course other ways of measuring the affinity between
subspaces; for instance, by taking the cosine of the first principal
angle. We prefer the definition above as it offers the flexibility of
allowing for some principal angles to be small or zero. As an example,
suppose we have a pair of subspaces with a nontrivial
intersection. Then $|\cos\theta^{(1)}| = 1$ regardless of the
dimension of the intersection whereas the value of the affinity would
depend upon this dimension.


\subsubsection{\texorpdfstring{Sampling density.}{Sampling density}}

Another important factor affecting the performance of subspace
clustering algorithms has to do with the distribution of points on
each subspace. In the model we study here, this essentially reduces to
the number of~points that lie on each subspace.\footnote{In a general
deterministic model, where the points have arbitrary orientations on
each subspace, we can imagine that the clustering problem becomes
harder as the points align along an even lower-dimensional
structure.}
%
%
\begin{definition} The sampling density $\rho$ of a subspace is
defined as the number of samples on that subspace per dimension. In
our multi-subspace model, the density of $S_\ell$ is, therefore,
$\rho_\ell=N_\ell/d_\ell$.\footnote{Throughout, we take
$\rho_\ell\le e^{d_\ell/2}$. Our results hold for all other values
by substituting $\rho_\ell$ with $\rho_\ell\wedge e^{d_\ell/2}$
in all the expressions.}
\end{definition}
One expects the clustering problem to become easier as the sampling
density increases. Obviously, if the sampling density of a subspace
$S$ is smaller than one, then any algorithm will fail in identifying
that subspace correctly as there are not sufficiently many points to
identify all the directions spanned by $S$. Hence, we would like a
clustering algorithm to be able to operate at values of the sampling
density as low as possible, that is, as close to one as possible.

\section{\texorpdfstring{Robust subspace clustering: Methods and concepts.}{Robust subspace clustering: Methods and concepts}}
\label{secmethod}

This section introduces our methodology through heuristic arguments
confirmed by numerical \mbox{experiments} while proven theoretical guarantees
about the first step of algorithm follow in Section~\ref{secmain}.
From now on, we arrange the $N$ observed data points as columns of a
matrix $\mathbf{Y}=[\mathbf{y}_1, \ldots, \mathbf{y}_N] \in\mathbb
{R}^{n\times
N}$. With obvious notation, $\mathbf{Y}= \mathbf{X}+ \mathbf{Z}$.

\subsection{\texorpdfstring{The normalized model.}{The normalized model}}

In practice, one may want to normalize the columns of the data matrix
so that for all $i$, $\|\mathbf{y}_i\|_{\ell_2} = 1$ [R-code snippet for
renormalizing a data point $y$ is: \texttt{y <-y/sqrt(sum(y{${}\wedge{}$}2))}]. Since with our SNR assumption, we have
$\|\mathbf{y}\|_{\ell_2} \approx\|\mathbf{x}\|_{\ell_2} \sqrt{1 + \sigma
^2}$ \emph{before} normalization, then \emph{after} normalization:
\[
\mathbf{y}\approx\frac{1}{\sqrt{1 + \sigma^2}} (\mathbf{x}+ \mathbf{z}),
\]
where $\mathbf{x}$ is unit-normed, and $\mathbf{z}$ has i.i.d. random
Gaussian entries
with variance~$\sigma^2/n$.

For ease of presentation, we work---in this section and in the
proofs---with a model $\mathbf{y}= \mathbf{x}+ \mathbf{z}$ in which $\|\mathbf{x}\|
_{\ell_2}=1$
instead of $\|\mathbf{y}\|_{\ell_2} = 1$ (the numerical Section~\ref
{secnumerical} is the exception). The normalized model with
$\|\mathbf{x}\|_{\ell_2}=1$ and $\mathbf{z}$ i.i.d. $\mathcal{N}(0,\sigma
^2/n)$ is
nearly the same as before. In particular, all of our methods and
theoretical results in Section~\ref{secmain} hold with both models in
which either $\|\mathbf{x}\|_{\ell_2} = 1$ or $\|\mathbf{y}\|_{\ell_2} = 1$.

\subsection{\texorpdfstring{The SSC scheme.}{The SSC scheme}}

We describe the approach in \cite{ehsanSSC}, which follows a
three-step procedure:
\begin{longlist}[III.]
\item[I.] Compute a similarity\footnote{We use the terminology
similarity graph or
matrix instead of affinity matrix as not to overload the word
``affinity.''} matrix $\mathbf{W}$ encoding similarities between
sample pairs as to construct a weighted graph~$\mathcal{G}$.
\item[II.] Construct clusters by applying spectral clustering
techniques (e.g.,~\cite{ng2002spectral}) to~$\mathcal{G}$.
\item[III.] Apply PCA to each of the clusters.
\end{longlist}

The novelty in \cite{ehsanSSC} concerns step~I, the construction of
the affinity matrix. Interestingly, similar ideas were introduced
earlier in the statistics literature for the purpose of graphical
model selection \cite{GLASSO}. Now the work \cite{ehsanSSC} of
interest here is mainly concerned with the noiseless situation in
which $\mathbf{Y}= \mathbf{X}$ and the idea is then to express each column
$\mathbf{x}_i$ of $\mathbf{X}$ as a sparse linear combination of all the
other columns. The reason is that under any reasonable condition, one
expects that the \emph{sparsest} representation of $\mathbf{x}_i$ would
only select vectors from the subspace in which $\mathbf{x}_i$ happens to
lie in. Applying the $\ell_1$ norm as the convex surrogate of sparsity
leads to the following sequence of optimization problems:
%
%
\begin{equation}
\label{eql1eq} \min_{\bolds{\beta} \in\mathbb{R}^N} \llVert \bolds {\beta }\rrVert
_{\ell_1}\qquad\mbox{subject to } \mathbf{x}_i=\mathbf{X}
\bolds{\beta}\quad\mbox {and}\quad \beta_i=0.
\end{equation}
Here, $\beta_i$ denotes the $i$th element of $\bolds{\beta}$ and the
constraint $\beta_i=0$ removes the trivial solution that decomposes a
point as a linear combination of itself. Collecting the outcome
of\vadjust{\goodbreak}
these $N$ optimization problems as columns of a matrix $\mathbf{B}$,
\cite{ehsanSSC} sets the $N \times N$ similarity matrix $\mathbf{W}$ to
be ${W}_{ij} = |{B}_{ij}| + |{B}_{ji}|$. [This algorithm clusters
linear subspaces but can also cluster affine subspaces by adding the
constraint $\bolds{\beta}^T\mathbf{1}= 1$ to (\ref{eql1eq}).]

The issue here is that we only have access to the noisy data
$\mathbf{Y}$; that is, we do not see the matrix $\mathbf{X}$ of covariates
but rather a corrupted version $\mathbf{Y}$. This makes the problem
challenging, as unlike conventional sparse recovery problems where
only the response vector $\mathbf{x}_i$ is corrupted, here both the covariates
(columns of $\mathbf{X}$) and the response vector are corrupted. In
particular, it may not be advisable to use (\ref{eql1eq}) with $\mathbf{y}_i$
and $\mathbf{Y}$ in place of $\mathbf{x}_i$ and $\mathbf{X}$ as,
strictly speaking, sparse
representations no longer exist. Observe that the expression
$\mathbf{x}_i=\mathbf{X}\bolds{\beta}$ can be rewritten as
$\mathbf{y}_i=\mathbf{Y}\bolds{\beta}+(\mathbf{z}_i-\mathbf{Z}\bolds
{\beta})$. Viewing
$(\mathbf{z}_i-\mathbf{Z}\bolds{\beta})$ as a perturbation, it is
natural to use ideas
from sparse regression to obtain an estimate $\hat{\bolds{\beta}}$,
which is then used to construct the similarity matrix. In this paper,
we follow the same three-step procedure and shall focus on the first
step in Algorithm~\ref{alg}; that is, on the construction of reliable
similarity measures between pairs of points. Since we have noisy
data, we shall not use (\ref{eql1eq}) here. Also, we add denoising to
step~III, check the output of Algorithm~\ref{alg}. {We would like to
emphasize early on that the theoretical analysis provided in this
paper only concerns the first step---the sparse regression part---of
the algorithm. We do not provide any guarantees for the spectral
clustering step.}

\begin{algorithm}[t]
\caption{Robust SSC procedure}
\begin{algorithmic}
\REQUIRE{A data set $\mathcal{Y}$ arranged as 
columns of $\mathbf{Y}\in\mathbb{R}^{n\times
N}$.}

\STATE1. For each $i \in\{1, \ldots, N\}$, produce a sparse
coefficient sequence $\{\hat{\bolds{\beta}}_i\}$ by regressing the $i$th
vector $\mathbf{y}_i$ onto the other columns of $\mathbf{Y}$. Collect
these as columns of a matrix $\mathbf{B}$.

\STATE2. Form the similarity graph $\mathcal{G}$ with nodes
representing the $N$ data points and edge weights given by $W_{ij} =
|B_{ij}| + |B_{ji}|$.

\STATE3. Sort the eigenvalues $\delta_1 \ge\delta_2 \ge\cdots\ge
\delta_N$ of the normalized Laplacian of $\mathcal{G}$ in descending order,
and set
\[
\hat{L}=N-\mathop{\arg\max}_{i=1,\ldots,N-1} (\delta_i-
\delta_{i+1}).
\]

\STATE4. Apply a spectral clustering technique to the similarity
graph using $\hat{L}$ as the estimated number of clusters to obtain
the partition $\mathcal{Y}_1,\ldots,\mathcal{Y}_{\hat{L}}$.

\STATE5. Use PCA to find the best subspace fits ($\{S_\ell\}_1^L$) to
each of the partitions ($\{\mathcal{Y}_\ell\}_1^L$) and denoise
$\mathbf{Y}$ as to obtain clean data points $\hat{\mathbf{X}}$.

\ENSURE{Subspaces $\{S_\ell\}_1^L$ and cleaned data points
$\hat{\mathbf{X}}$.}
\end{algorithmic}
\label{alg}
\end{algorithm}


\subsection{\texorpdfstring{Performance metrics for similarity measures.}{Performance metrics for similarity measures}}

Given the general structure of the method, we are interested in sparse
regression techniques, which tend to select points in the same
clusters (share the same underlying subspace) over those that do not
share this property. Expressed differently, the hope is that whenever
$B_{ij} \neq0$, $\mathbf{y}_i$ and $\mathbf{y}_j$ {originate from}
the same
subspace. We introduce metrics to quantify performance.

%
%
\begin{definition}[(False discoveries)] Fix $i$ and $j \in\{1, \ldots,
N\}$ and let $\mathbf{B}$ be the outcome of step~1 in Algorithm~\ref{alg}. Then we say that $(i,j)$ obeying $B_{ij} \neq0$ is a
false discovery if $\mathbf{y}_i$ and $\mathbf{y}_j$ do not {originate from}
the same
subspace. 
\end{definition}
%
%
\begin{definition}[(True discoveries)] In the same situation, $(i,j)$
obeying $B_{ij} \neq0$ is a true discovery if $\mathbf{y}_j$ and $\mathbf{y}_i$
{originate from} the same cluster/subspace.
\end{definition}
When there are no false discoveries, we shall say that the \emph{subspace detection property} holds. In this case, the matrix
$\mathbf{B}$ is block diagonal after applying a permutation which makes
sure that columns in the same subspace are contiguous. In some cases,
the sparse regression method may select vectors from other subspaces
and this property will not hold. However, it might still be possible
to detect and construct reliable clusters by applying steps 2--5 in
Algorithm~\ref{alg}.

\subsection{\texorpdfstring{LASSO with data-driven regularization.}{LASSO with data-driven regularization}}

A natural sparse regression strategy is the LASSO:
%
%
\begin{equation}
\label{eqlasso} \min_{\bolds{\beta} \in\mathbb{R}^N} \frac{1}{2}\llVert \mathbf
{y}_i-\mathbf{Y}\bolds{\beta}\rrVert _{\ell_2}^2+
\lambda\llVert \bolds{\beta}\rrVert _{\ell_1}\qquad\mbox{subject to }
\beta_i=0.
\end{equation}
Whether such a methodology should succeed is unclear as we are not
under a traditional model for both the response $\mathbf{y}_i$ and the
covariates $\mathbf{Y}$ are noisy; see \cite{MUS} for a discussion of sparse
regression under matrix uncertainty and what can go wrong. The main
contribution of this paper is to show that if one selects $\lambda$ in
a data-driven fashion, then compelling practical and theoretical
performance can be achieved.




\subsubsection{\texorpdfstring{About as many true discoveries as dimension.}{About as many true discoveries as dimension}}

The nature of the problem is such that we wish to make few false
discoveries (and not link too many pairs belonging to different
subspaces) and so we would like to choose $\lambda$ large. At the same
time, we wish to make many true discoveries, whence a natural trade
off. The reason why we need many true discoveries is that spectral
clustering needs to assign points to the same cluster when they indeed
lie near the same subspace. If the matrix $\mathbf{B}$ is too sparse,
this will not happen.

We now introduce a principle for selecting the regularization
parameter; our exposition here is informal and we refer to\vadjust{\goodbreak} Section~\ref
{secmain} and the supplemental article \cite{RSCsupp} for precise
statements and
proofs. Suppose we have noiseless data so that $\mathbf{Y}= \mathbf
{X}$, and thus
solve (\ref{eql1eq}) with equality constraints. Under our model,
assuming there are no false discoveries, the optimal solution is
guaranteed to have exactly $d$---the dimension of the subspace the
sample under study belongs to---nonzero coefficients with probability
one. That is to say, when the point lies in a $d$-dimensional space,
we find $d$ ``neighbors.''

The selection rule we shall analyze in this paper is to take $\lambda$
as large as possible (as to prevent false discoveries) while making
sure that the number of true discoveries is also on the order of the
dimension $d$, typically in the range $[0.5 d, 0.8 d]$. We can say
this differently. Imagine that all the points lie in the same subspace
of dimension $d$ so that every discovery is true. Then we wish to
select $\lambda$ in such a way that the number of discoveries is a
significant fraction of $d$, the number one would get with noiseless
data. Which value of $\lambda$ achieves this goal? We will see in
Section~\ref{seclambda} that the answer is around $1/\sqrt{d}$. To
put this in context, this means that we wish to select a
regularization parameter which depends upon the dimension $d$ of the
subspace our point comes from. (We are aware that the dependence on
$d$ is unusual as in sparse regression the regularization parameter
usually does not depend upon the sparsity of the solution.) In turn,
this immediately raises another question: since $d$ is unknown, how
can we proceed? In Section~\ref{sectwo-step}, we will see that it is
possible to guess the dimension and construct fairly reliable
estimates.

\subsubsection{\texorpdfstring{Data-dependent regularization.}{Data-dependent regularization}}
\label{seclambda}

We now discuss values of $\lambda$ obeying the demands formulated in
the previous section. Our arguments are informal and we refer the
reader to Section~\ref{secmain} for rigorous statements and to
the supplemental article \cite{RSCsupp}. First, it simplifies the
discussion to assume that we have no noise (the noisy case assuming
$\sigma\ll1$ is similar). Following our earlier discussion, imagine
we have a vector $\mathbf{x}\in\mathbb{R}^n$ lying in the
$d$-dimensional span of
the columns of an $n \times N$ matrix $\mathbf{X}$. We are interested in
values of $\lambda$ so that the minimizer $\hat{\bolds{\beta}}$ of
the LASSO
functional
\[
K(\bolds{\beta},\lambda) = \tfrac{1}{2} \llVert \mathbf{x}-\mathbf {X}\bolds{
\beta}\rrVert _{\ell_2}^2+\lambda\llVert \bolds{\beta }\rrVert
_{\ell_1}
\]
has a number of nonzero components in the range $[0.5 d, 0.8 d]$,
say. Now let $\hat{\bolds{\beta}}_{\mathrm{eq}}$ be the solution of
the problem with equality
constraints, or equivalently of the problem above with $\lambda
\rightarrow0^+$. Then
%
%
\begin{equation}
\label{eqscaling} \tfrac{1}{2} \llVert \mathbf{x}-\mathbf{X}\hat{\bolds{\beta}}
\rrVert _{\ell
_2}^2 \le K(\hat{\bolds{\beta}},\lambda) \le K(
\hat{\bolds{\beta }}_{\mathrm
{eq}},\lambda) = \lambda \llVert \hat{\bolds{\beta}}_{\mathrm{eq}}\rrVert _{\ell_1}.
\end{equation}
We make two observations: the first is that if $\hat{\bolds{\beta}}$
has a number
of nonzero components in the range $[0.5 d, 0.8 d]$, then $\|\mathbf
{x}-\mathbf{X}
\hat{\bolds{\beta}}\|_{\ell_2}^2$ has to be greater than or equal to
a fixed
numerical constant. The reason is that we cannot approximate to
arbitrary accuracy a generic vector living in a $d$-dimensional
subspace as a linear combination of about $d/2$ elements from that
subspace. The second observation is that $\|\hat{\bolds{\beta}}_{\mathrm{eq}}\|_{\ell_1}$
is on the order of $\sqrt{d}$, which is a fairly intuitive scaling (we
have $d$ coordinates, each of size about $1/\sqrt{d}$). This holds
with the proviso that the algorithm operates correctly in the
noiseless setting and does not select columns from other
subspaces. Then (\ref{eqscaling}) implies that $\lambda$ has to scale
at least like $1/\sqrt{d}$. On the other hand, $\hat{\bolds{\beta}}=
\mathbf{0}$ if
$\lambda\ge\|\mathbf{X}^T \mathbf{x}\|_{\ell_\infty}$. Now the informed
reader knows
that $\|\mathbf{X}^T \mathbf{x}\|_{\ell_\infty}$ scales at most like
$\sqrt {(\log
N)/d}$ so that choosing $\lambda$ around this value yields no
discovery (one can refine this argument to show that $\lambda$ cannot
be higher than a constant times $1/\sqrt{d}$ as we would otherwise
have a solution that is too sparse). Hence, $\lambda$ is around
$1/\sqrt{d}$.

It might be possible to compute a precise relationship between
$\lambda$ and the expected number of true discoveries in an asymptotic
regime in which the number of points and the dimension of the subspace
both increase to infinity in a fixed ratio by adapting ideas from
\cite{DMP,BMLasso}. We will not do so here as this is beyond the scope
of this paper. Rather, we investigate this relationship by means of a
numerical study.

%
\begin{figure}

\includegraphics{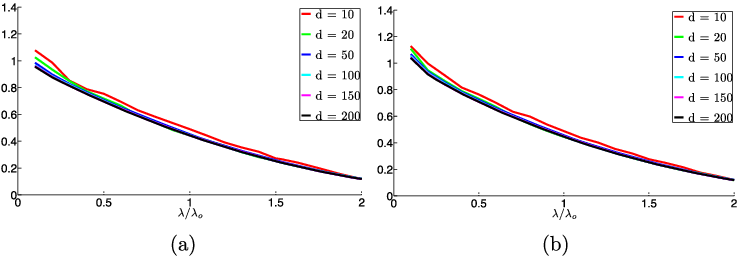}

\caption{Average number of true discoveries normalized by subspace
dimension for values of $\lambda$ in an interval including the
heuristic $\lambda_o= 1/\sqrt{d}$. \textup{(a)} $\sigma=0.25$. \textup{(b)} $\sigma=0.5$.}\label{figMT}
\end{figure}

Here, we fix a single subspace in $\mathbb{R}^{n}$ with $n ={}$2000.
We use a
sampling density equal to $\rho=5$ and vary the dimension $d \in\{10,
20, 50, 100, 150, 200\}$ of the subspace as well as the noise level
$\sigma\in\{ 0.25, 0.5\}$. For each data point, we solve
(\ref{eqlasso}) for different values of $\lambda$ around the
heuristic $\lambda_o = 1/\sqrt{d}$, namely, $\lambda\in[0.1
\lambda_o, 2\lambda_o]$. In our experiments, we declare a discovery if
an entry in the optimal solution exceeds $10^{-3}$. Figure~\ref{figMT}(a)~and~(b) shows the number of discoveries per
subspace dimension (the number of discoveries divided by $d$). One can
clearly see that the curves corresponding to various subspace
dimensions stack up on top of each other, thereby confirming that a
value of $\lambda$ on the order of $1/\sqrt{d}$ yields a fixed
fraction of true discoveries. Further inspection also reveals that the
fraction of true discoveries is around $50\%$ near
$\lambda=\lambda_o$, and around $75\%$ near $\lambda=\lambda_o/2$. We
have observed empirically that increasing $\rho$ typically yields a
slight increase in the fraction of true discoveries (unless, of
course, $\rho$ is exponentially large in $d$).

\subsubsection{\texorpdfstring{The false-true discovery trade off.}{The false-true discovery trade off}}

We now show empirically that in our model choosing $\lambda$ around
$1/\sqrt{d}$ typically yields very few false discoveries as well as
many true discoveries; this holds with the proviso that the subspaces
are of course not very close to each other.

In this simulation, $22$ subspaces of varying dimensions in $\mathbb{R}^{n}$
with $n = {}$2000 have been independently selected uniformly at random;
there are $5$, $4$, $3$, $4$, $4$~and~$2$ subspaces of respective
dimensions $200$, $150$, $100$, $50$, $20$ and $10$. This is a
challenging regime since the sum of the subspace dimensions equals
2200 and exceeds the ambient dimension (the clean data matrix
$\mathbf{X}$ has full rank). We use a sampling density equal to $\rho=5$
for each subspace and set the noise level to $\sigma= 0.3$. To
evaluate the performance of the optimization problem (\ref{eqlasso}),
we proceed by selecting a subset of columns as follows: for each
dimension, we take $100$ cases at random belonging to subspaces of
that dimension. Hence, the total number of test cases is $m=600$ so
that we only solve $m$ optimization problems (\ref{eqlasso}) out of
the total $N$ possible cases. Below, $\bolds{\beta}^{(i)}$ is the
solution to (\ref{eqlasso}) and $\bolds{\beta}^{(i)}_S$ its restriction
to columns with indices in the same subspace. Hence, a nonvanishing
entry in $\bolds{\beta}^{(i)}_S$ is a true discovery, and likewise, a
nonvanishing entry in $\bolds{\beta}^{(i)}_{S^c}$ is false. For each
data point,\vspace*{1pt} we sweep the tuning parameter $\lambda$ in (\ref{eqlasso})
around the heuristic $\lambda_o = 1/\sqrt{d}$ and work with $\lambda
\in[0.05 \lambda_o, 2.5\lambda_o]$. In our experiments, a discovery
is a value obeying $|B_{ij}| > 10^{-3}$.

%
\begin{figure}

\includegraphics{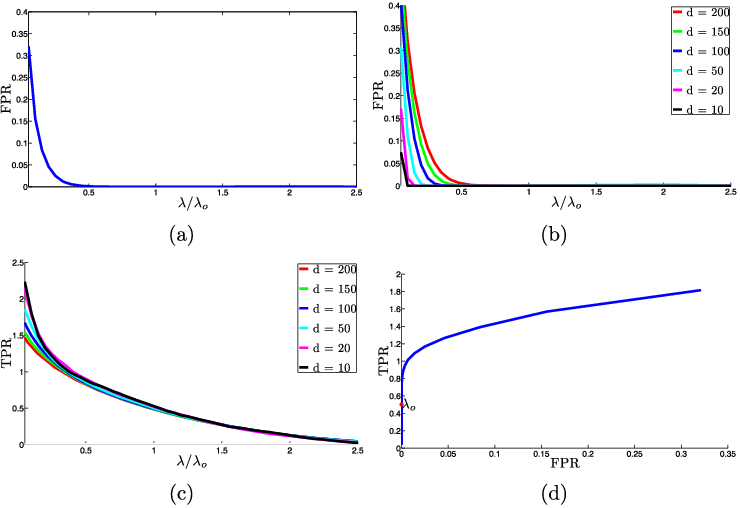}

\caption{Performance of LASSO for values of $\lambda$ in an
interval including the heuristic $\lambda_o= 1/\sqrt{d}$.
\textup{(a)}~Average number of false discoveries normalized by
$(n-d)$ (FPR) on all $m$ sampled data points. \textup{(b)}~FPR for
different subspace dimensions. Each curve represents the
average FPR over those samples originating from subspaces of
the same dimension. \textup{(c)} Average number of true discoveries
per dimension for various dimensions (TPR). \textup{(d)} TPR vs. FPR
(ROC curve). The point corresponding to $\lambda=\lambda_o$
is marked as a red dot.}\label{figFPR}
\end{figure}

In analogy with the signal detection literature, we view the empirical
averages of $\|\bolds{\beta}^{(i)}_{S^c}\|_{\ell_0}/(n-d)$ and
$\|\bolds{\beta}^{(i)}_S\|_{\ell_0}/d$ as False Positive Rate (FPR) and
True Positive Rate (TPR). On the one hand, Figure~\ref{figFPR}(a)~and~(b) shows that for values around \mbox{$\lambda=\lambda_o$}, the
FPR is zero (so there are no false discoveries). On the other hand,
Figure~\ref{figFPR}(c) shows that the TPR curves corresponding to
different dimensions are very close to each other and resemble those
in Figure~\ref{figFPR}(c) in which all the points belong to the same
cluster with no opportunity of making a false discovery. Hence, taking
$\lambda$ near $1/\sqrt{d}$ gives a performance close to what can be
achieved in a noiseless situation. That is to say, we have no false
discovery and a number of true discoveries about $d/2$ if we choose
$\lambda= \lambda_o$. Figure~\ref{figFPR}(d) plots TPR versus FPR
[a.k.a. the Receiver Operating Characteristic (ROC) curve] and
indicates that $\lambda= \lambda_o$ (marked by a red dot) is an
attractive trade-off as it provides no false discoveries and
sufficiently many true discoveries.

\subsubsection{\texorpdfstring{A two-step procedure.}{A two-step procedure}}
\label{sectwo-step}

Returning to the selection of the regularization parameter, we would
like to use $\lambda$ on the order of $1/\sqrt{d}$. However, we do not
know $d$ and proceed by substituting an estimate. In the next
section, we will see that we are able to quantify theoretically the
performance of the following proposal: (1)~run a hard constrained
version of the LASSO and use an estimate $\hat{d}$ of dimension based
on the $\ell_1$ norm of the fitted coefficient sequence; (2)~impute a
value for $\lambda$ constructed from $\hat{d}$. The two-step procedure
is explained in Algorithm~\ref{mod}. Again, our exposition is informal
here and we refer to Section~\ref{secmain} for precise statements.
\begin{algorithm}[t]
\caption{Two-step procedure with data-driven regularization}
\begin{algorithmic}
\FOR{$i=1, \ldots, N$}
\STATE1. Solve
%
%
\begin{equation}
\label{eqLASSO1} \bolds{\beta}^\star= \arg\min_{\bolds{\beta} \in\mathbb
{R}^N}
\llVert \bolds{\beta}\rrVert _{\ell_1}\qquad\mbox{subject to } \llVert
\mathbf{y}_i-\mathbf{Y}\bolds{\bolds{\beta}}\rrVert _{\ell
_2}\le
\tau\quad\mbox{and}\quad\bolds{\beta}_i =0.
\end{equation}
\STATE2. Set $\lambda= f(\|\bolds{\beta}^\star\|_{\ell_1})$.
\STATE
3. Solve
\[
\hat{\bolds{\beta}}=\arg\min_{\bolds{\beta} \in\mathbb{R}^N} \frac
{1}{2}\llVert
\mathbf{y}_i-\mathbf{Y}\bolds{\beta}\rrVert _{\ell
_2}^2
+ \lambda \llVert \bolds{\beta}\rrVert _{\ell_1}\qquad\mbox{subject to }
\bolds{\beta}_i=0.
\]
\STATE4. Set $\mathbf{B}_i=\hat{\bolds{\beta}}$.
\ENDFOR
\end{algorithmic}
\label{mod}
\end{algorithm}

To understand the rationale behind this, imagine we have noiseless
data---that is, $\mathbf{Y}= \mathbf{X}$---and are solving (\ref
{eql1eq}), which simply
is our first step (\ref{eqLASSO1}) with the proviso that $\tau=
0$. When there are no false discoveries, one can show that the
$\ell_1$ norm of $\bolds{\beta}^\star$ is roughly of size $\sqrt {d}$ as
shown in Lemma~A.2 from the supplemental article \cite{RSCsupp}. This
suggests using a multiple of $\|\bolds{\beta}^\star\|_{\ell_1}$ as a
proxy for $\sqrt{d}$. To drive this point home, take a look at
Figure~\ref{figl1norm}(a) which solves (\ref{eqLASSO1}) with the same
data as
in the previous example and $\tau= 2\sigma$. The plot reveals that
the values of $\|\bolds{\beta}^\star\|_{\ell_1}$ fluctuate around
$\sqrt{d}$. This is shown more clearly in Figure~\ref{figl1norm}(b), which shows that
$\|\bolds{\beta}^\star\|_{\ell_1}$ is concentrated around
$\frac{1}{4}\sqrt{d}$ with, as expected, higher volatility at lower
values of dimension.
%
%
\begin{figure}[b]

\includegraphics{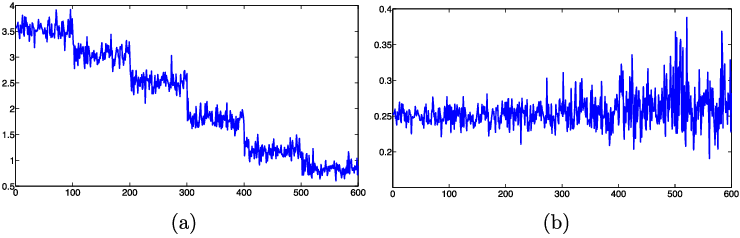}

\caption{Optimal values of (\protect\ref{eqLASSO1}) for $600$ samples
using $\tau=2\sigma$. The first $100$ values correspond to
points originating from subspaces of dimension $d = 200$,
the next $100$ from those of dimension $d = 150$, and so on
through $d \in\{100, 50, 20, 10\}$. \textup{(a)} Value of
$\llVert \bolds{\beta}^*\rrVert _{\ell_1}$. \textup{(b)} Value of
$\llVert \bolds{\beta}^*\rrVert _{\ell_1}/\sqrt{d}$.}\label{figl1norm}
\end{figure}

Under suitable assumptions, we shall see in Section~\ref{secmain}
that with noisy data, there are simple rules for selecting $\tau$ that
guarantee, with high probability, that there are no false
discoveries. To be concrete, one can take $\tau= 2\sigma$ and $f(t)
\propto t^{-1}$. Returning to our running example, we have
$\|\bolds{\beta}^\star\|_{\ell_1}\approx\frac{1}{4}\sqrt{d}$. Plugging
this into $\lambda=1/\sqrt{d}$ suggests taking $f(t) \approx0.25
t^{-1}$. The plots in Figure~\ref{figtwostep} demonstrate that this
is indeed effective. Experiments in Section~\ref{secnumerical}
indicate that this is a good choice on real data as well.

%
\begin{figure}

\includegraphics{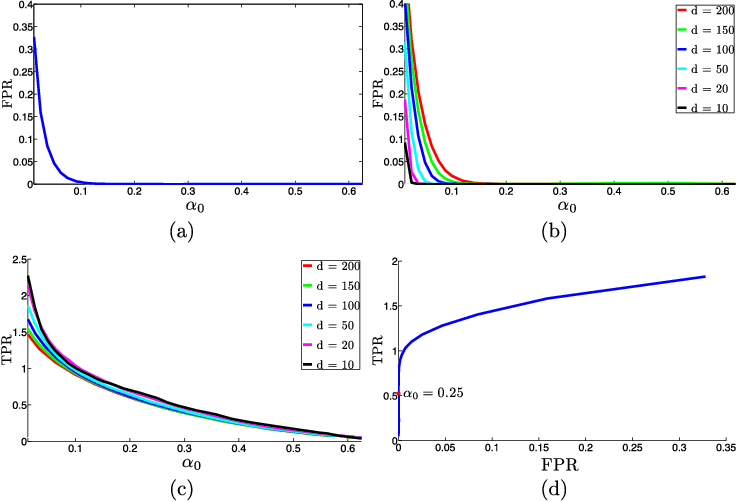}

\caption{Performance of the two-step procedure using
$\tau=2\sigma$ and $f(t) = \alpha_0t^{-1}$ for values of
$\alpha_0$ around the heuristic $\alpha_0=0.25$. \textup{(a)}~False
positive rate (FPR). \textup{(b)}~FPR for various subspace
dimensions. \textup{(c)}~True positive rate (TPR). \textup{(d)}~TPR
vs. FPR.}
\label{figtwostep}
\end{figure}

The two-step procedure requires solving two LASSO problems for each
data point and is useful when there are subspaces of large dimensions
(in the hundreds, say) and some others of low-dimensions (three or
four, say). In some applications such as motion segmentation in
computer vision, the dimensions of the subspaces are all equal and
known in advance \cite{tomasi1992shape}. In this case, one can forgo
the two-step procedure and
simply set $\lambda= 1/\sqrt{d}$.

\section{\texorpdfstring{Theoretical results.}{Theoretical results}}\label{secmain}

This section presents our main theoretical results concerning the
performance of the two-step procedure (Algorithm~\ref{mod}). We defer
the proof of these results to the supplemental article \cite{RSCsupp}.
We make two assumptions:
\begin{itemize}
\item\textit{Affinity condition}. We say that a subspace $S_\ell$ obeys
the \emph{affinity condition} if
%
%
\begin{equation}
\label{eqaffinitycond} \max_{k\dvtx    k \neq\ell} \operatorname{aff}(S_\ell,
S_k) \le{\kappa_0}/{\log N},
\end{equation}
where $\kappa_0$ a fixed numerical constant.

\item\textit{Sampling condition}. We say that subspace $S_\ell$ obeys
the \emph{sampling condition} if
%
%
\begin{equation}
\label{eqsamplingcond} \rho_\ell\ge\rho^\star,
\end{equation}
where $\rho^\star$ is a fixed numerical constant.
\end{itemize}
The careful reader might argue that we should require smaller affinity
values as the noise level increases. The reason why $\sigma$ does not
appear in (\ref{eqaffinitycond}) is that we assumed a bounded noise
level. For higher values of $\sigma$, the affinity condition would
read as in (\ref{eqaffinitycond}) with a right-hand side equal to
\[
\kappa= \frac{\kappa_0}{\log
N}-\sigma\sqrt{\frac{d_\ell}{2n\log N}}.
\]

\subsection{\texorpdfstring{Main results.}{Main results}}\label{mainresults}
From here on, we use $d(i)$ to refer to the dimension of the subspace
the vector $\mathbf{y}_i$ originates from. $N(i)$ and $\rho(i)$ are used
in a
similar fashion for the number and density of points on this subspace.

%
\begin{theorem}[(No false discoveries)]
\label{nofwithd}
Assume that the subspace attached to the $i$th column obeys the
affinity and sampling conditions and that the noise level $\sigma$ is
bounded as in (\ref{eqsimplicity}), where $\sigma^\star$ is a
sufficiently small numerical constant. In Algorithm~\ref{mod}, take
$\tau= 2\sigma$ and $f(t)$ obeying $f(t) \ge0.707 \sigma t^{-1}$.
Then with high probability,\footnote{Probability at least
$1-2e^{-\gamma_1n}-6e^{-\gamma_2d(i)}-e^{-\sqrt{N(i)d(i)}}-\frac{23}{N^2}$,
for fixed numerical constants $\gamma_1$, $\gamma_2$.} there is no
false discovery in the $i$th column of~$\mathbf{B}$.
\end{theorem}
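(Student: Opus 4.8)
The plan is to produce a dual certificate that confines the support of the Step~3 solution to the columns sharing the subspace of $\y_i$. Assume without loss of generality that this subspace is $S=S_1$ of dimension $d=d(i)$, let $\Y_{\parallel}$ collect the columns $\y_j$ ($j\neq i$) whose clean part $\x_j$ lies in $S$, and call the remaining columns \emph{foreign}. First I would solve the restricted (fictitious) LASSO
\[
\hat{\vct\beta}=\arg\min_{\vct\beta}\ \tfrac12\twonorm{\y_i-\Y_{\parallel}\vct\beta}^2+\lambda\onenorm{\vct\beta},
\]
built from the in-subspace columns only, and set $\vct r=\y_i-\Y_{\parallel}\hat{\vct\beta}$. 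Padding $\hat{\vct\beta}$ with zeros on the foreign coordinates yields a feasible point of the Step~3 program in Algorithm~\ref{mod} whose full residual coincides with $\vct r$. By convexity, this padded vector is a minimizer of the full program, with support inside $S$, as soon as the subgradient (KKT) conditions hold; the in-subspace conditions are automatic since $\hat{\vct\beta}$ is optimal for the fictitious program, so the whole theorem reduces to the strict dual-feasibility bound
\[
\max_{j\,:\,\x_j\notin S}\ \bigl|\iprod{\y_j}{\vct r}\bigr|<\lambda .
\]
Strictness additionally forces every optimal support to avoid the foreign columns, which is exactly the no-false-discovery claim.

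Two preliminary estimates feed the certificate. From the optimality of $\hat{\vct\beta}$ one gets $\infnorm{\Y_{\parallel}^T\vct r}\le\lambda$ and a bound $\twonorm{\vct r}=O(1)$ (one cannot fit a generic vector of $S$ from roughly $d/2$ of its neighbors, so the residual stays of constant order). Writing $\y_i=\x_i+\z_i$ and $\Y_{\parallel}=\X_{\parallel}+\Z_{\parallel}$, the residual splits as $\vct r=(\x_i-\X_{\parallel}\hat{\vct\beta})+(\z_i-\Z_{\parallel}\hat{\vct\beta})$: the first summand lies in $S$, the second is a Gaussian-type term whose component orthogonal to $S$ carries the bulk of the noise. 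I would also control $\onenorm{\hat{\vct\beta}}$; this is tied to the first-step quantity $\onenorm{\vct\beta^\star}$, which Lemma~\ref{lem:step1} places at order $\sqrt d$, and it is through this link that the rule $\lambda=f(\onenorm{\vct\beta^\star})$ with $f(t)\ge 0.707\,\sigma\, t^{-1}$ becomes a bound of the form $\lambda\gtrsim\sigma/\sqrt d$.

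The crux is to bound $|\iprod{\y_j}{\vct r}|$ for a foreign column $\y_j=\x_j+\z_j$ with $\x_j\in S_k$, $k\neq1$. The data-dependence of both $\vct r$ and $\lambda$ is handled by conditioning on the entire in-subspace data (equivalently on $\y_i$ and $\Y_{\parallel}$): this freezes $\vct r$ and $\lambda$, while the foreign columns remain independent. Splitting $\iprod{\y_j}{\vct r}=\iprod{\x_j}{\vct r}+\iprod{\z_j}{\vct r}$, the noise term is conditionally mean-zero Gaussian with standard deviation $\sigma\twonorm{\vct r}/\sqrt n$, hence at most a $\sqrt{\log N}$ multiple of this after a union bound over the at most $N$ foreign columns. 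For the signal term, $\x_j$ is uniform on the unit sphere of $S_k$, so $\iprod{\x_j}{\vct r}=\iprod{\x_j}{P_{S_k}\vct r}$, where $P_{S_k}$ is the orthogonal projector onto $S_k$, concentrates at scale $\twonorm{P_{S_k}\vct r}/\sqrt{d_k}$ up to a $\sqrt{\log N}$ factor. The decisive and most delicate step is bounding $\twonorm{P_{S_k}\vct r}$: a naive operator-norm bound would only give the cosine of the smallest principal angle times $\twonorm{\vct r}$, which is useless when the subspaces merely intersect. Instead I would exploit the rotational invariance of the uniform model on $S$: the in-subspace part of $\vct r$ is an isotropically random direction of $S$, so $\twonorm{P_{S_k}\vct r}$ concentrates at $\text{aff}(S,S_k)\,\twonorm{\vct r}$, the root-mean-square of the cosines rather than their maximum. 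This is precisely why the affinity of the Definition, and not the first principal angle, is the right quantity.

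Combining these, each foreign inner product is bounded, with the stated probability, by a $\sqrt{\log N}$ multiple of $\text{aff}(S,S_k)\,\twonorm{\vct r}/\sqrt{d_k}$ plus a noise term of order $\sigma\twonorm{\vct r}\sqrt{(\log N)/n}$. The affinity hypothesis $\text{aff}\le\kappa_0/\log N$ in \eqref{eq:affinity_cond} defeats the first against the $1/\sqrt d$ scale of $\lambda$, while the dimension bound $d\lesssim n/(\log N)^2$ in \eqref{eq:simplicity} forces the noise term well below $\sigma/\sqrt d$; the constant $0.707\approx1/\sqrt2$ in the hypothesis on $f$ is calibrated to leave a definite margin. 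Taking $\kappa_0$ and $\sigma^\star$ sufficiently small then yields the strict certificate $\max_j|\iprod{\y_j}{\vct r}|<\lambda$, and the failure probabilities of the several concentration events assemble into the bound quoted in the footnote. I expect the two genuine obstacles to be (i) making the conditioning argument rigorous, so that the first-step quantity $\onenorm{\vct\beta^\star}$—and hence $\lambda$—can be treated as frozen while preserving independence of the foreign columns, and (ii) the geometric concentration of $\twonorm{P_{S_k}\vct r}$ around $\text{aff}(S,S_k)\,\twonorm{\vct r}$, which is where the principal-angle structure must be used in full.
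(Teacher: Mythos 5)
Your proposal follows essentially the same route as the paper: the reduction via a restricted (reduced) LASSO plus a strict dual-feasibility certificate on the foreign columns (the paper's Lemma \ref{anyopt}), the link $\lambda\gtrsim\sigma/\sqrt{d}$ through the Step-1 value (Lemma \ref{lem:step1}), the isotropy of the residual so that the affinity---the root-mean-square of the cosines---rather than the first principal angle governs the signal term (Lemmas \ref{infnormlog} and \ref{lem:isotropy}), and the final balance against the affinity and dimension conditions. The only spot needing care is your claim that $\twonorm{P_{S_k}\vct{r}}$ concentrates at $\text{aff}(S,S_k)\twonorm{\vct{r}}$: this holds for the component of $\vct{r}$ inside $S$, while the component in $S^{\perp}$ must be treated separately (it contributes a term of order $\sigma\sqrt{(\log N)/n}$, the paper's splitting of the residual into $\vct{r}_{\parallel}$ and $\vct{r}_{\perp}$ and its four-term decomposition), which your subsequent accounting does in fact capture.
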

%
%
\begin{theorem}[(Many true discoveries)]
\label{mtwithd}
Consider the same setup as in Theorem~\ref{nofwithd} with $f(\cdot)$ also
obeying $f(t) \le\alpha_0 t^{-1}$ for some numerical constant $\alpha
_0$. Then
with high probability,\footnote{Probability at least
$1-2e^{-\gamma_1n}-6e^{-\gamma_2d(i)}-e^{-\sqrt{N(i)d(i)}}-\frac{23}{N^2}$,
for fixed numerical constants $\gamma_1$, $\gamma_2$.} there are at
least
%
%
\begin{equation}
\label{numtrue} c_1 \frac{d(i)}{\log\rho(i)}
\end{equation}
true discoveries in the $i$th column ($c_1$ is a positive numerical
constant).
\end{theorem}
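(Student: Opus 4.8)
The plan is to reduce the problem to a single subspace by leaning on the no-false-discovery guarantee, and then to lower bound the number of nonzeros of the LASSO solution $\hat{\vct{\beta}}$ by a ratio of norms. First I would condition on the high-probability event of Theorem \ref{nofwithd}, which is available here since Theorem \ref{mtwithd} assumes the same setup, the same $\tau=2\sigma$, and $f(t)\ge 0.707\sigma t^{-1}$. On this event every discovery in the $i$th column is true, so the support of $\hat{\vct{\beta}}$ lies entirely in the block of indices attached to the subspace $S_\ell$ of dimension $d=d(i)$; consequently the number of true discoveries equals $\zeronorm{\hat{\vct{\beta}}}$, and $\hat{\vct{\beta}}$ is unchanged if the full dictionary $\Y$ is replaced by the sub-dictionary $\Yell$ of the $N(i)-1\approx\rho(i)d$ noisy atoms drawn from $S_\ell$. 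The task becomes: lower bound the number of nonzeros of a LASSO fit performed inside one subspace. For this I would use the elementary inequality $\zeronorm{\hat{\vct{\beta}}}\ge \onenorm{\hat{\vct{\beta}}}^2/\twonorm{\hat{\vct{\beta}}}^2$ (Cauchy--Schwarz on the support), so that it suffices to bound $\onenorm{\hat{\vct{\beta}}}$ from below and $\twonorm{\hat{\vct{\beta}}}$ from above.

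For the $\ell_1$ lower bound, write $\vct{r}=\y_i-\Y\hat{\vct{\beta}}$. Stationarity of the LASSO gives $\Y^T\vct{r}=\lambda\,\vct{g}$ with $\vct{g}\in\partial\onenorm{\hat{\vct{\beta}}}$, and pairing with $\hat{\vct{\beta}}$ yields the identity $\lambda\onenorm{\hat{\vct{\beta}}}=\iprod{\Y\hat{\vct{\beta}}}{\vct{r}}=\iprod{\y_i}{\vct{r}}-\twonorm{\vct{r}}^2$. The magnitude of $\lambda$ is pinned down by Step 2 of Algorithm \ref{mod}: since $0.707\sigma/\onenorm{\vct{\beta}^\star}\le\lambda\le \alpha_0/\onenorm{\vct{\beta}^\star}$ and Lemma \ref{lem:step1} gives $\onenorm{\vct{\beta}^\star}\asymp\sqrt{d}$, we obtain $\lambda\asymp 1/\sqrt{d}$. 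It then remains to show that the explained signal $\iprod{\y_i}{\vct{r}}-\twonorm{\vct{r}}^2$ is bounded below by a numerical constant---that is, that with $\lambda$ of this order the fit captures a fixed fraction of the unit-norm signal $\x_i$ rather than collapsing to $\vct{0}$---which forces $\onenorm{\hat{\vct{\beta}}}\gtrsim\sqrt{d}$, matching the heuristic scaling of Section \ref{sec:lambda}.

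The hard part is the $\ell_2$ upper bound $\twonorm{\hat{\vct{\beta}}}\lesssim\sqrt{\log\rho(i)}$. On the support $T$, stationarity reads $\hat{\vct{\beta}}_T=(\Y_T^T\Y_T)^{-1}(\Y_T^T\y_i-\lambda\,\sgn{\hat{\vct{\beta}}_T})$, so controlling $\twonorm{\hat{\vct{\beta}}}$ requires (a) a lower bound on the least singular value of $\Y_T$, i.e.\ well-conditioning of the support Gram matrix, and (b) control of $\twonorm{\Y_T^T\y_i}$; both are nonasymptotic random-matrix statements about points uniform on the sphere of a $d$-dimensional subspace, and both draw on the sampling condition $\rho(i)\ge\rho^\star$ and the bounded affinity hypothesis \eqref{eq:affinity_cond}. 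The genuine obstacle is that $T$ is selected by the optimization and hence depends on the data, so one cannot freeze a deterministic support and apply concentration directly; the plan is instead to prove a bound that is uniform over all admissible supports among the $\rho(i)d$ atoms, and it is precisely the maximal/union inequality over these $\exp(\Theta(d\log\rho(i)))$ configurations that manufactures the $\log\rho(i)$ factor in the squared norm. Assembling the two estimates gives $\zeronorm{\hat{\vct{\beta}}}\ge \onenorm{\hat{\vct{\beta}}}^2/\twonorm{\hat{\vct{\beta}}}^2\gtrsim d/\log\rho(i)$, which is \eqref{numtrue}; I expect this last step---the uniform control of $\twonorm{\hat{\vct{\beta}}}$ over data-dependent supports via random matrix theory---to be where essentially all the technical work lives.
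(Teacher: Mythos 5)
Your skeleton coincides with the paper's: condition on the no-false-discovery event so that every discovery is true and the LASSO reduces to the single-subspace problem, then bound $\zeronorm{\betahat}\ge\onenorm{\betahat}^2/\twonorm{\betahat}^2$. But both halves of that norm-ratio estimate have genuine gaps, and one of your intermediate claims is false as stated. The lower bound $\onenorm{\betahat}\gtrsim\sqrt{d}$ cannot hold with a universal constant: already for the noiseless equality-constrained problem, Lemma \ref{boundprimal} together with the inradius estimate of Lemma \ref{lbinrad} shows the minimal $\ell_1$ representation has norm $O(\sqrt{d/\log\rho})$, and comparing LASSO objective values at $\betahat$ and at that representation shows $\onenorm{\betahat}=O(\sigma^2\sqrt{d}+\sqrt{d/\log\rho})$, which is $o(\sqrt{d})$ in the high-density, low-noise regime. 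Moreover your route to the bound --- the stationarity identity $\lambda\onenorm{\betahat}=\langle\Y\betahat,\vct{r}\rangle$ plus the assertion that the explained signal is bounded below by a constant --- leaves exactly the substantive geometric step unproven; that assertion \emph{is} the claim that a small-$\ell_1$ combination cannot approximate a generic point of the subspace. The paper proves the correct bound $\onenorm{\betahat}\ge c\sqrt{d/\log\rho}$ (event $E_3$) by noting that $\betahat$ is feasible for the $\tau$-constrained $\ell_1$ problem with $\tau=\twonorm{\y-\Y^{(1)}\betahat}\le 110\alpha_0+C\sigma<1$ --- this is where the hypothesis $f(t)\le\alpha_0 t^{-1}$ actually enters, via Lemma \ref{lem:res} --- and then exhibiting a dual feasible point $\vct{\nu}^\star$ with $\langle\ypar,\vct{\nu}^\star\rangle\gtrsim\sqrt{d/\log\rho}$ and $\twonorm{\vct{\nu}^\star}\lesssim\sqrt{d/\log\rho}$, exactly as in the lower bound of Lemma \ref{lem:step1}.

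The $\ell_2$ upper bound is where you concede all the technical work lies, and the union-bound-over-supports plan does not go through as described: the LASSO support is selected by the data and is not a priori of size $O(d/\log\rho)$, and once $|T|$ approaches $d$ the Gram matrix $\Y_T^T\Y_T$ of points drawn from a $d$-dimensional subspace becomes ill-conditioned or singular, so no uniform lower bound on $\sigma_{\min}(\Y_T)$ exists over the supports you would need to cover. The paper sidesteps the data-dependent support entirely with a case split: if $\zeronorm{\betahat}\ge s$ with $s\asymp d/\log\rho$ there is nothing to prove, and if $\zeronorm{\betahat}\le s$ then Wojtaszczyk's Theorem \ref{wojtasth} --- which needs only RIP of the rescaled $\Ypar$ at sparsity $s$ plus the inradius bound, not control over arbitrary supports --- compares $\betahat$ to the noiseless minimal-$\ell_1$ solution and yields $\twonorm{\betahat}\le C$, a constant rather than your target $\sqrt{\log\rho}$ (Lemma \ref{l2boundonopt}). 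With the correct pair of estimates, $\onenorm{\betahat}\gtrsim\sqrt{d/\log\rho}$ and $\twonorm{\betahat}\le C$, Cauchy--Schwarz gives \eqref{numtrue}; as proposed, neither of your two estimates is established and one is unattainable.
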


The above results indicate that the {first step of the} algorithm
works correctly in fairly broad conditions. To give an example, assume
two subspaces of dimension $d$ overlap in a smaller subspace of
dimension $s$ but are orthogonal to each other in the remaining
directions (equivalently, the first $s$ principal angles are $0$ and
the rest are $\pi/2$). In this case, the affinity between the two
subspaces is equal to $\sqrt{s/d}$ and (\ref{eqaffinitycond}) allows
$s$ to grow almost linearly in the dimension of the subspaces. Hence,
subspaces can have intersections of large dimensions. In contrast,
previous work with perfectly noiseless data
\cite{elhamifar2010clustering} would impose to have a first principal
angle obeying $|\cos\theta^{(1)}| \le{1}/{\sqrt{d}}$ so that the
subspaces are practically orthogonal to each other. Whereas our result
shows that we can have an average of the cosines practically constant,
the condition in \cite{elhamifar2010clustering} asks that the maximum
cosine be very small.\looseness=1

In the noiseless case, \cite{ourSSC} showed that when the sampling
condition holds and
\[
\max_{k\dvtx    k \neq\ell}\operatorname{aff}(S_\ell, S_k)
\le \kappa _0\frac{\sqrt{\log\rho_\ell}}{\log N}
\]
(albeit with slightly different values $\kappa_0$ and $\rho^\star$),
then applying the noiseless version~(\ref{eql1eq}) of the algorithm
also yields no false discoveries. Hence, with the proviso that the
noise level is not too large, conditions under which the algorithm is
provably correct are essentially the same.

Earlier, we argued that we would like to have, if possible, an
algorithm provably working at (1) high values of the affinity
parameters and (2) low values of the sampling density as these are the
conditions under which the clustering problem is challenging. (Another
property on the wish list is the ability to operate properly with high
noise or low SNR and this is discussed next.) In this context, since
the affinity is at most one, our results state that the affinity can
be within a log factor from this maximum possible value. The number of
samples needed per subspace is minimal as well. That is, as long as
the density of points on each subspace is larger than a constant
$\rho>\rho^\star$, the algorithm succeeds.\footnote{This is with the
proviso that the density does not grow exponentially in the
dimension of the subspace. This is not a restrictive assumption as
having exponentially many points from the same subspace makes the
problem especially easy.}

We would like to have a procedure capable of making no false
discoveries and many true discoveries at the same time. Now in the
noiseless case, whenever there are no false discoveries, the $i$th
column contains exactly $d(i)$ true discoveries. Theorem~\ref{mtwithd}
states that as long as the noise level $\sigma$ is less than a fixed
numerical constant, the number of true discoveries is roughly on the
same order as in the noiseless case. In other words, a noise level of
this magnitude does not fundamentally affect the performance of the
algorithm. This holds even when there is great variation in the
dimensions of the subspaces, and is possible because $\lambda$ is
appropriately tuned in an adaptive fashion.

The number of true discoveries is shown to scale at least like
dimension over the log of the density. This may suggest that the
number of true discoveries decreases (albeit very slowly) as the
sampling density increases. This behavior is to be expected: when the
sampling density becomes exponentially large (in terms of the
dimension of the subspace) the number of true discoveries become small
since we need fewer columns to synthesize a point. In fact, the
$d/\log\rho$ behavior seems to be the correct scaling. Indeed, when
the density is low and $\rho$ takes on a small value, (\ref{numtrue})
asserts that we make on the order of $d$ discoveries, which is
tight. Imagine now that we are in the high-density regime and $\rho$
is exponential in~$d$. Then as the points gets tightly packed, we
expect to have only one discovery in accordance with (\ref{numtrue}).

Theorem~\ref{mtwithd} establishes that there are many true
discoveries. This would not be useful for clustering purposes if there
were only a handful of very large true discoveries and all the others
of negligible magnitude. The reason is that the similarity matrix
$\mathbf{W}$ would then be close to a sparse matrix and we would run the
risk of splitting true clusters. Our proofs show that this does not
happen although we do not present an argument for lack of space.
Rather, we demonstrate this property empirically. On our running
example, Figure~\ref{fighist1}(a) and~(b) shows that the
histograms of appropriately normalized true discovery values resemble
a bell-shaped curve. Note that each true discovery corresponds to a
nonzero coefficient which can take on either a positive or negative
value.
%
%
\begin{figure}

\includegraphics{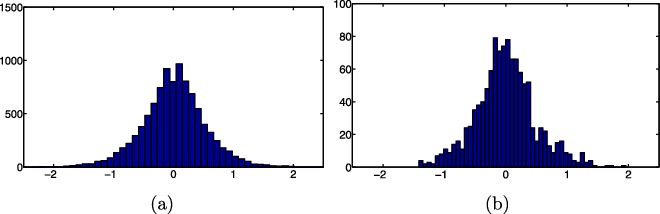}

\caption{Histograms of the true discovery values from the two step
procedure with $\alpha_0=0.25$ (multiplied by $\sqrt{d}$). \textup{(a)}~$d=200$.
\textup{(b)}~$d=20$.}\label{fighist1}
\end{figure}

As stated numerous times, our theoretical analysis only concerns the
first step of the algorithm. We now wish to explain how these
theoretical results relate to complete guarantees for clustering.
First, Theorem~\ref{nofwithd} states that clusters that should be
disconnected from each other are, in fact, disconnected so that the
algorithm does not group together points from different subspaces. To
guarantee perfect clustering, it is then sufficient to show that each
restriction of the similarity graph to a subspace is connected. Due to
the nature of the random model under study, a~subgraph resembles an
Erd\H os--R\`eyni graph with the probability of having an edge roughly
proportional to the number of true discoveries. As long as there are
sufficiently many true discoveries (as shown in Theorem
\ref{mtwithd}), such a graph is well connected---in fact, it has very
good expansion properties. Proving that each subgraph is indeed
connected is a problem we regard as interesting, the main challenge
being caused by the dependencies the algorithm generates. Second, a
more quantitative characterization of the expansion or connectedness
of each subgraph via Cheeger's constant or the eigenvalue gap may
ultimately demonstrate that the algorithm succeeds even in the
presence of few false discoveries with small values of $W_{ij}$;
please see \cite{kannan2009spectral} and references therein.

Finally, we would like to comment on the fact that our main results
hold when $\lambda$ belongs to a fairly broad range of values. First,
when all the subspaces have small dimensions, one can choose the same
value of $\lambda$ for all the data points since $1/\sqrt{d}$ is
essentially constant. Hence, when we know a priori that we are in such
a situation, there may be no need for the two-step procedure. (We
would still recommend the conservative two-step procedure because of
its superior empirical performance on real data.) Second, the proofs
also reveal that if we have knowledge of the dimension of the largest
subspace $d_{\max}$, the first theorem holds with a fixed value
of $\lambda$ proportional to $\sigma/\sqrt{d_{\max}}$. Third,
when the subspaces themselves are drawn at random, the first theorem
holds with a fixed value of $\lambda$ proportional to $\sigma(\log
N)/\sqrt{n}$. (Both these statements follow by plugging these values
of $\lambda$ in the proofs of the supplemental article \cite{RSCsupp} and we omit the
calculations.) We merely mention these variants to give a sense of
what our theorems can also give. As explained earlier, we recommend
the more conservative two-step procedure with the proxy for
$1/\sqrt{d}$. The reason is that using a higher value of $\lambda$
allows for a larger value of $\kappa_0$ in (\ref{eqaffinitycond}),
which says that the subspaces can be even closer. In other words, we
can function in a more challenging regime. To drive this point home,
consider the noiseless problem. When the subspaces are close, the
equality constrained $\ell_1$ problem may yield some false
discoveries. However, if we use the LASSO version---even though the
data is noiseless---we may end up with no false discoveries while
maintaining sufficiently many true discoveries.

\section{\texorpdfstring{The bias-corrected Dantzig selector.}{The bias-corrected Dantzig selector}}\label{dantzigsection}

One can think of other ways of performing the first step in Algorithm
\ref{alg} and this section discusses another approach based on a
modification of the Dantzig selector, a popular sparse regression
technique \cite{dantzig}. Unlike the two-step procedure, we do not
claim any theoretical guarantees for this method and shall only
explore its properties on real and simulated data.

Applied directly to our
problem, the Dantzig selector takes the form
%
%
\begin{equation}
\label{eqds} \qquad\min_{\bolds{\beta} \in\mathbb{R}^N} \llVert \bolds {\beta }\rrVert
_{\ell_1}\qquad\mbox{subject to } \bigl\|\mathbf{Y}_{(-i)}^T(
\mathbf{y}_i-\mathbf{Y}\bolds {\bolds{\beta}})\bigr\|_{\ell
_\infty} \le
\lambda \quad\mbox{and}\quad\bolds{\beta}_i =0,
\end{equation}
where $\mathbf{Y}_{(-i)}$ is $\mathbf{Y}$ with the $i$th column
deleted. However, this
is hardly suitable since the design matrix $\mathbf{Y}$ is
corrupted. Interestingly, recent work \cite{MUS,ImprovedMUS} has
studied the problem of estimating a sparse vector from the standard
linear model under uncertainty in the design matrix. The setup in
these papers is close to our problem and we propose a modified Dantzig
selection procedure inspired but not identical to the methods set
forth in \cite{MUS,ImprovedMUS}.

\subsection{\texorpdfstring{The correction.}{The correction}}

If we had clean data, we would solve
(\ref{eql1eq}); this is (\ref{eqds}) with $\mathbf{Y}= \mathbf
{X}$ and $\lambda=
0$. Let $\bolds{\beta}^I$ be the solution to this ideal noiseless
problem. Applied to our problem, the main idea in
\cite{MUS,ImprovedMUS} would be to find a formulation that resembles
(\ref{eqds}) with the property that $\bolds{\beta}^I$ is
feasible. Since $\mathbf{x}_i = \mathbf{X}_{(-i)} {\bolds{\beta}^I}_{(-i)}$,
observe that we have the
following decomposition:
\begin{eqnarray*}
\mathbf{Y}_{(-i)}^T\bigl(\mathbf{y}_i - \mathbf{Y}
\bolds{\beta}^I\bigr) & =& (\mathbf{X}_{(-i)} + \mathbf{Z}
_{(-i)})^T\bigl(\mathbf{z}_i - \mathbf{Z}\bolds{
\beta}^I \bigr)
\\
& =& \mathbf{X}_{(-i)}^T\bigl(\mathbf{z}_i-
\mathbf{Z}\bolds{\beta}^I\bigr) + \mathbf{Z}_{(-i)}^T
\mathbf{z}_i - \mathbf{Z}_{(-i)}^T \mathbf{Z}\bolds{
\beta}^I.
\end{eqnarray*}
Then the conditional mean is given by
\[
\operatorname{\mathbb{E}}\bigl[\mathbf{Y}_{(-i)}^T\bigl(
\mathbf{y}_i - \mathbf {Y}\bolds{\beta }^I\bigr) |
\mathbf{X}\bigr] = - \operatorname{\mathbb {E}}\mathbf{Z}_{(-i)}^T
\mathbf{Z}_{(-i)} {\bolds{\beta}^I}_{(-i)} = -
\sigma^2 {\bolds{\beta}^I}_{(-i)}.
\]
In other words,
\[
\sigma^2 {\bolds{\beta}^I}_{(-i)} +
\mathbf{Y}_{(-i)}^T\bigl(\mathbf{y}_i - \mathbf{Y}
\bolds{\beta}^I\bigr) = \bolds{\xi},
\]
where $\bolds{\xi}$ has mean zero. In Section~\ref{secvariance}, we
compute the variance of the $j$th component $\xi_j$, given by
%
%
\begin{equation}
\label{eqvariance} \operatorname{\mathbb{E}}\xi_j^2 =
\frac{\sigma^2}{n}\bigl(1 + \bigl\| \bolds{\beta}^I\bigr\|_{\ell_2}^2
\bigr) + \frac{\sigma
^4}{n} \bigl(1 + \bigl(\beta^I_j
\bigr)^2 + \bigl\|\bolds{\beta}^I\bigr\|^2_{\ell
_2}
\bigr).
\end{equation}
Owing to our Gaussian assumptions, $|\xi_j|$ shall be smaller than 3
or 4 times this standard deviation, say, with high probability.

Hence, we may want to consider a procedure of the form
%
%
\begin{eqnarray}
\label{eqcds} \qquad&&\min_{\bolds{\beta} \in\mathbb{R}^N} \llVert \bolds {\beta }\rrVert
_{\ell_1}\qquad\mbox {subject to } \bigl\|\mathbf{Y}_{(-i)}^T(
\mathbf{y}_i-\mathbf{Y}\bolds {\bolds{\beta}}) + \sigma^2
\bolds{\beta}_{(-i)}\bigr\|_{\ell_\infty} \le\lambda\quad \mbox{and}
\nonumber\\[-8pt]\\[-8pt]
&& \bolds{\beta}_i =0.\nonumber
\end{eqnarray}
It follows that if we take $\lambda$ to be a reasonable multiple of
(\ref{eqvariance}), then $\bolds{\beta}^I$ would obey the
constraint in
(\ref{eqcds}) with high probability. Hence, we would need to
approximate the variance (\ref{eqvariance}). Numerical simulations
together with asymptotic calculations presented in the supplemental article \cite{RSCsupp} give that $\|\bolds{\beta}^I\|_{\ell_2} \le1$ with
very high
probability. Thus, neglecting the term in $(\beta^I_j)^2$,
\[
\operatorname{\mathbb{E}}\xi_j^2 \approx
\frac{\sigma^2}{n}\bigl(1 + \sigma^2\bigr) \bigl(1 + \bigl\|\bolds{
\beta}^I\bigr\|_{\ell_2}^2\bigr) \le2 \frac{\sigma^2}{n}
\bigl(1 + \sigma^2\bigr).
\]
This suggests taking $\lambda$ to be a multiple of $\sqrt{2/n}
\sigma  \sqrt{1 + \sigma^2}$. This is interesting because the
parameter $\lambda$ does not depend on the dimension of the underlying
subspace. We shall refer to (\ref{eqcds}) as the \emph{bias-corrected
Dantzig selector}, which resembles the proposal in
\cite{MUS,ImprovedMUS} for which the constraint is a bit more
complicated and of the form $\|\mathbf{Y}_{(-i)}^T(\mathbf{y}_i-\mathbf
{Y}\bolds{\bolds{\beta}}) +
\mathbf{D}_{(-i)} \bolds{\beta}\|_{\ell_\infty} \le\mu\|\bolds
{\beta }\|_{\ell
_1} +
\lambda$.\vadjust{\goodbreak}

To get a sense about the validity of this proposal, we test it on our
running example by varying $\lambda\in[\lambda_o,8\lambda_o]$ around
the heuristic $\lambda_o = \sqrt{2/n}   \sigma  \sqrt{1 +
\sigma^2}$. Figure~\ref{figImDantzig} shows that good results are
achieved around factors in the range $[4,6]$.

%
\begin{figure}

\includegraphics{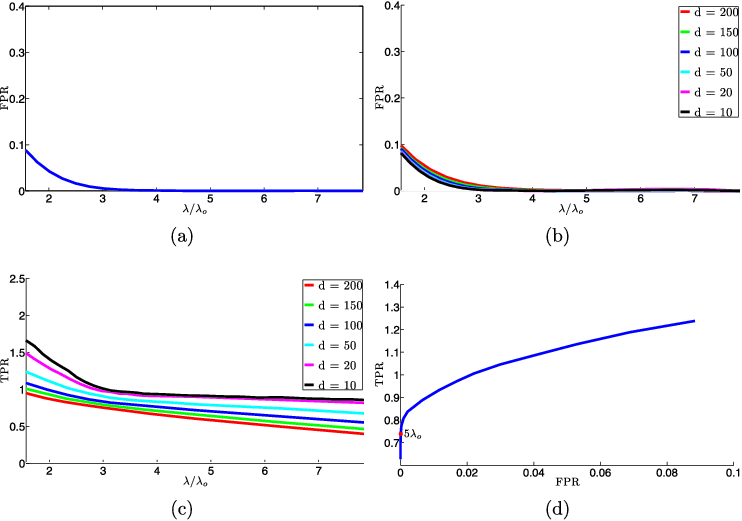}

\caption{Performance of the bias-corrected Dantzig selector
for values of $\lambda$ that are multiples of the heuristic
$\lambda_o = \sqrt{2/n}   \sigma  \sqrt{1 +
\sigma^2}$. \textup{(a)}~False positive rate (FPR). \textup{(b)}~FPR for
different subspace dimensions. \textup{(c)}~True positive rate
(TPR). \textup{(d)}~TPR vs. FPR.}\label{figImDantzig}
\end{figure}

In our synthetic simulations, both the two-step procedure and the
corrected Dantzig selector seem to be working well in the sense that
they yield many true discoveries while making very few false
discoveries, if any. Comparing Figure~\ref{figImDantzig}(b) and~(c) with those from Section~\ref{secmethod} show that the
corrected Dantzig selector has more true discoveries for subspaces of
small dimensions (they are essentially the same for subspaces of large
dimensions); that is, the two-step procedure is more conservative when
it comes to subspaces of smaller dimensions. As explained earlier, this
is due to our conservative choice of $\lambda$ resulting in a TPR
about half of what is obtained in a noiseless setting. Having said
this, it is important to keep in mind that in these simulations the
planes are drawn at random and as a result, they are sort of far from
each other. This is why a less conservative procedure can still
achieve a low FPR. When subspaces of smaller dimensions are closer to
each other or when the statistical model does not hold exactly as in
real data scenarios, a conservative procedure may be more
effective. In fact, experiments on real data in Section~\ref
{secnumerical} confirm this and show that for the corrected
Dantzig selector, one needs to choose values much larger than
$\lambda_o$ to yield good results.

\subsection{\texorpdfstring{Variance calculation.}{Variance calculation}}
\label{secvariance}

By definition,
\begin{eqnarray*}
\xi_j &=& \bigl\langle\mathbf{x}_j, \mathbf{z}_i-
\mathbf{Z}\bolds{\beta }^I\bigr\rangle+ \langle\mathbf{z}_j,
\mathbf{z}_i\rangle- \bigl(\mathbf{z}_j^T
\mathbf{z}_j- \sigma^2\bigr) \beta^I_j
- \sum_{k\dvtx  k \neq i, j} \mathbf{z}_j^T
\mathbf{z}_k \beta^I_k
\\
&:=& I_1 +I_2 + I_3 + I_4.
\end{eqnarray*}
A simple calculation shows that for $\ell_1 \neq\ell_2$,
$\operatorname{Cov}(I_{\ell_1},I_{\ell_2}) = 0$ so that
\[
\operatorname{\mathbb{E}}\xi_j^2 = \sum
_{\ell= 1}^4 \operatorname {Var}(I_\ell).
\]
We compute
\begin{eqnarray*}
\operatorname{Var}(I_1) & =& \frac{\sigma^2}{n}\bigl(1 + \bigl\|\bolds{\beta }^I\bigr\|_{\ell
_2}^2 \bigr),\qquad
\operatorname{Var}(I_3)  = \frac{\sigma^4}{n} 2\bigl(\beta_j^I \bigr)^2,
\\
\operatorname{Var}(I_2) & =& \frac{\sigma^4}{n},
\qquad
\operatorname{Var}(I_4)  = \frac{\sigma^4}{n} \bigl[\bigl\|\bolds{\beta
}^I\bigr\| _{\ell_2}^2 - \bigl(\beta_j^I
\bigr) ^2\bigr]
\end{eqnarray*}
and (\ref{eqvariance}) follows.

\section{\texorpdfstring{Comparisons with other works.}{Comparisons with other works}}
\label{comp}
We now briefly comment on other approaches to subspace
clustering. Since this paper is theoretical in nature, we shall focus
on comparing theoretical properties and refer to \cite{SSCalg,vidaltutorial} for a detailed comparison about empirical
performance. Three themes will help in organizing our discussion.
\begin{itemize}
\item\textit{Tractability}. Is the proposed method or algorithm
computationally tractable?
\item\textit{Robustness}. Is the algorithm provably robust to noise and
other imperfections?
\item\textit{Efficiency}. Is the algorithm correctly operating near the
limits we have identified above? In our model, how many points do we
need per subspace? How large can the affinity between subspaces be?
\end{itemize}

One can broadly classify existing subspace clustering techniques into
four categories, namely, algebraic, iterative, statistical and spectral
clustering-based
\mbox{methods}.

Methods inspired from algebraic geometry have been introduced for
clustering purposes. In this area, a mathematically intriguing
approach is the \emph{generalized principal component analysis} (GPCA)
presented in \cite{GPCA}. Unfortunately, this algorithm is not
tractable in the dimension of the subspaces, meaning that a
polynomial-time algorithm does not exist. Another feature is that
GPCA is not robust to noise although some heuristics have been
developed to address this issue; see, for example, \cite{noisyGPCA}. As
far as
the dependence upon key parameters is concerned, GPCA is essentially
optimal. An interesting approach to make GPCA robust is based on
semidefinite programming \cite{gpcaSDP}. However, this novel
formulation is still intractable in the dimension of the subspaces and
it is not clear how the performance of the algorithm depends upon the
parameters of interest.


A representative example of an iterative method---the term is taken
from the tutorial \cite{vidaltutorial}---is the $K$-subspace algorithm
\cite{ksubspaces}, a procedure which can be viewed as a generalization
of $K$-means. Here, the subspace clustering problem is formulated as a
nonconvex optimization problem over the choice of bases for each
subspace as well as a set of variables indicating the correct
segmentation. A cost function is then iteratively optimized over the
basis and the segmentation variables. Each iteration is
computationally tractable. However, due to the nonconvex nature of
the problem, the convergence of the sequence of iterates is only
guaranteed to a local minimum. As a consequence, the dependence upon
the key parameters is not well understood. Furthermore, the algorithm
can be sensitive to noise and outliers. Other examples of iterative
methods may be found in \cite{Bradley,agarwal,luvidal,mediankflat}.

Statistical methods typically model the subspace clustering problem as
a mixture of degenerate Gaussian observations. Two such approaches are
\emph{mixtures of probabilistic PCA} (MPPCA) \cite{MPPCA} and \emph{agglomerative lossy compression} (ALC) \cite{ALC}. MPPCA seeks to
compute a maximum-likelihood estimate of the parameters of the mixture
model by using an expected--maximization (EM) style algorithm. ALC
searches for a segmentation of the data by minimizing the code length
necessary (with a code based on Gaussian mixtures) to fit the points
up to a given distortion. Once more, due to the nonconvex nature of
these formulations, the dependence upon the key parameters and the
noise level is not understood.

Many other methods apply spectral clustering to a specially
constructed graph \cite{Boult,Yan,zhang2012hybrid,goh2007segmenting,SCC,SCCFOCM,arias,aldroubi2012nearness}. They
share the same difficulties as stated above and \cite{vidaltutorial}
discusses advantages and drawbacks. An approach of this kind is termed
\emph{Sparse Curvature Clustering} (SCC) \cite{SCC,SCCFOCM}; please
also see \cite{arias,ariasclust}. This approach is not tractable in
the dimension of the subspaces as it requires building a tensor with
$N^{(d+2)}$ entries and involves computations with this tensor. Some
theoretical guarantees for this algorithm are given in \cite{SCCFOCM}
although its limits of performance and robustness to noise are not
fully understood. An approach similar to SSC is called \emph{low-rank
representation} (LRR) \cite{LRR}. The LRR algorithm is tractable but
its robustness to noise and its dependence upon key parameters is not
understood. The work in \cite{lerman} formulates the robust subspace
clustering problem as a nonconvex geometric minimization problem over
the Grassmanian. Because of the nonconvexity, this formulation may
not be tractable. On the positive side, this algorithm is provably
robust and can accommodate noise levels up to
$\mathcal{O}(1/(Ld^{3/2}))$. However, the density $\rho$ required for
favorable properties to hold is an unknown function of the dimensions
of the subspaces (e.g., $\rho$ could depend on $d$ in a super
polynomial fashion). Also, the bound on the noise level seems to
decrease as the dimension $d$ and number of subspaces $L$
increases. In contrast, our theory requires $\rho\ge\rho^\star$ where
$\rho^\star$ is a fixed numerical constant. While this manuscript was
under preparation, we learned of \cite{highrankMC} which establishes
robustness to sparse outliers but with a dependence on the key
parameters that is super-polynomial in the dimension of the subspaces
demanding $\rho\ge C_0   d^{\log n}$. (Numerical simulations in
\cite{highrankMC} seem to indicate that $\rho$ cannot be a constant.)

We note that the papers \cite{MUS,ImprovedMUS,loh2012high} also address
regression under corrupted covariates. However, there are three key
differences between these studies and our work. First, our results
show that LASSO without any change is robust to corrupted covariates
whereas these works require modifications to either LASSO or the
Dantzig selector. Second, the modeling assumptions for the uncorrupted
covariates are significantly different. These papers assume that
$\mathbf{X}$
has i.i.d. rows and obeys the \emph{restricted eigenvalue condition}
(REC) whereas we have columns sampled from a mixture model so that the
design matrices do not have much in common. Last, for clustering and
classification purposes, we need to verify that the support of the
solution is correct whereas these works establish closeness to an
oracle solution in an $\ell_2$ sense. In short, our work is far closer
to multiple hypothesis testing.

Finally, in the data mining literature subspace clustering is
sometimes used to describe a different---although related---problem;
see \cite{muller2009evaluating,gunnemann2011flexible,keller2012hics}.

\section{\texorpdfstring{Numerical experiments.}{Numerical experiments}}
\label{secnumerical}

In this section, we perform numerical experiments corroborating our
main results and suggesting their applications to temporal
segmentation of motion capture data. In this application, we are given
sensor measurements at multiple joints of the human body captured at
different time instants. The goal is to segment the sensory data so
that each cluster corresponds to the same activity. Here, each data
point corresponds to a vector whose elements are the sensor
measurements of different joints at a fixed time instant.

We use the Carnegie Mellon Motion Capture dataset (available at \url{http://mocap.cs.cmu.edu}), which
contains 149 subjects performing several activities (data are provided
in \cite{MCdata}). The motion capture system uses 42 markers per
subject. We consider the data from subject $86$ in the dataset,
consisting of $15$ different trials, where each trial comprises
multiple activities. We use trials $2$ and $5$, which feature more
activities ($8$ activities for trial $2$ and $7$ activities for trial
$5$) and are, therefore, harder examples relative to the other
trials. Figure~\ref{figactivities} shows a few snapshots of each
activity (walking, squatting, punching, standing, running, jumping,
arms-up and drinking) from trial $2$. The right plot in Figure~\ref{figactivities} shows the singular values of three of the
activities in this trial. Notice that all the curves have a
low-dimensional knee, showing that the data from each activity lie in
a low-dimensional subspace of the ambient space ($n=42$ for all the
motion capture data).

%
%
\begin{figure}

\includegraphics{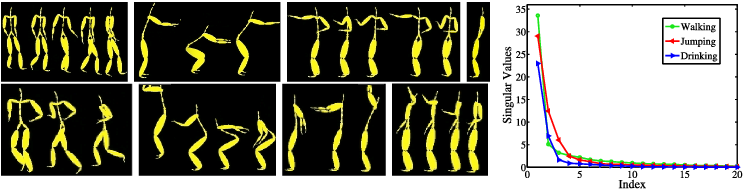}

\caption{Left: eight activities performed by subject $86$ in the CMU
motion capture dataset: walking, squatting, punching, standing,
running, jumping, arms-up and drinking. Right: singular values of the
data from three activities (walking, jumping, drinking) show that the
data from each activity lie approximately in a low-dimensional subspace.}\label{figactivities}
\end{figure}

We compare three different algorithms: a baseline algorithm, the
two-step procedure and the bias-corrected Dantzig selector. We
evaluate these algorithms based on the \emph{clustering error}. That
is, we assume knowledge of the number of subspaces and apply spectral
clustering to the similarity matrix built by the algorithm. After the
spectral clustering step, the clustering error is simply the ratio of
misclassified points to the total number of points. We report our
results on half of the examples---downsampling the video by a factor
$2$ keeping every other frame---as to make the problem more
challenging. (As a side note, it is always desirable to have methods
that work well on a smaller number of examples as one can use
split-sample strategies for tuning purposes.)\footnote{We have adopted
this subsampling strategy to make our experiments reproducible. For
tuning purposes, a random strategy may be preferable.}

As a baseline for comparison, we apply spectral clustering to a
standard similarity graph built by connecting each data point to its
$K$-nearest neighbors. For pairs of data points, $\mathbf{y}_i$ and $\mathbf{y}_j$,
that are connected in the $K$-nearest neighbor graph, we define the
similarities between them by $W_{ij} = \exp(-\| \mathbf{y}_i - \mathbf{y}_j
\|_2^2 /
t)$, where $t > 0$ is a tuning parameter (a.k.a. temperature). For
pairs of data points, $\mathbf{y}_i$ and $\mathbf{y}_j$, that are not
connected in the
$K$-nearest neighbor graph, we set $W_{ij} = 0$. Thus, pairs of
neighboring data points that have small Euclidean distances from each
other are considered to be more similar, since they have high
similarity $W_{ij}$. We then apply spectral clustering to the
similarity graph and measure the clustering error. For each value of
$K$, we record the minimum clustering error over different choices of
the temperature parameter $t >0$ as shown in Figure~\ref{figknn2}(a)~and~(b). The minimum clustering error for trials $2$ and
$5$ are $17.06\%$ and $12.47\%$.

%
\begin{figure}

\includegraphics{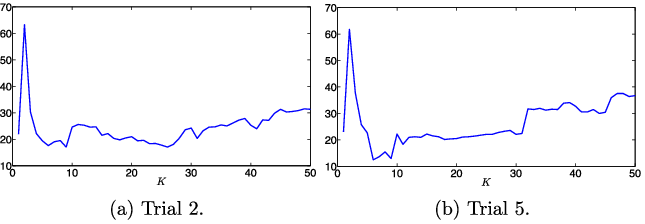}

\caption{Minimum clustering error ($\%$) for each $K$ in the baseline
algorithm.}\label{figknn2}
\end{figure}

For solving the LASSO problems in the two-step procedure, we developed
a computational routine made publicly available \cite{MSweb} based on
TFOCS \cite{TFOCS} solving the optimization problems in parallel. For
the corrected Dantzig selector, we use a homotopy solver in the spirit
of \cite{homotopy}.

For both the two-step procedure and the bias-corrected Dantzig
selector, we normalize the data points as a preprocessing step. We
work with a noise $\sigma$ in the interval $[0.001,0.045]$, and use
$f(t)=\alpha/t$ with values of $\alpha$ around $1/4$ (this is
equivalent to varying $\lambda$ around $1/\lambda_o =
4\llVert \bolds{\beta}^\star\rrVert _{\ell_1}$) in the two-step
procedure. For the
bias-corrected Dantzig selector, we vary $\lambda$ around $\lambda_o =
\sqrt{2/n}   \sigma  \sqrt{1 +\sigma^2}$. After building the
similarity graph from the sparse regression output, we apply spectral
clustering as explained earlier. Figures~\ref{figTS}(a)~and~(b), \ref{figImDan}(a) and~(b) show the
clustering error (on trial $5$) and the red point indicates the
location where the minimum clustering error is reached. Figure~\ref{figTS}(a) and~(b) shows that for the two-step procedure
the value of the clustering error is not overly sensitive to the
choice of $\sigma$---especially around $\lambda=\lambda_o$. Notice
that the clustering error for the robust versions of SSC are
significantly lower than the baseline algorithm for a wide range of
parameter values. The reason the baseline algorithm performs poorly in
this case is that there are many points that are in small Euclidean
distances from each other, but belong to different subspaces.


%
\begin{figure}

\includegraphics[scale=0.97]{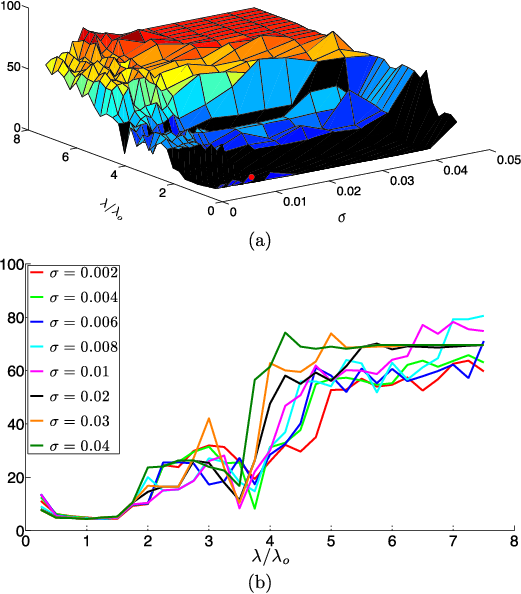}

\caption{Clustering error (\%) for different values of $\lambda$
and $\sigma$ on trial $5$ using the two-step procedure \textup{(a)}~3D plot
(minimum clustering error appears
in red). \textup{(b)}~2D cross sections.}\label{figTS}
\end{figure}

Finally a summary of the clustering errors of these algorithms on the
two trials are reported in Table~\ref{table1}. Robust versions of SSC outperform
the baseline algorithm. This shows that the multiple subspace model is
better for clustering purposes. The two-step procedure seems to work
slightly better than the corrected Dantzig selector for these two
examples. Table~\ref{table2} reports the optimal parameters that achieve the
minimum clustering error for each algorithm. The table indicates that
on real data, choosing $\lambda$ close to $\lambda_o$ also works very
well. Also, one can see that in comparison with the synthetic
simulations of Section~\ref{dantzigsection}, a more conservative
choice of the regularization parameter $\lambda$ is needed for the
corrected Dantzig selector as $\lambda$ needs to be chosen much higher
than $\lambda_o$ to achieve the best results. This may be attributed
to the fact that the subspaces in this example are very close to each
other and are not drawn at random as was the case with our synthetic
data. To get a sense of the affinity values, we fit a subspace of
dimension $d_\ell$ to the $N_\ell$ data points from the $\ell$th
group, where $d_\ell$ is chosen as the smallest nonnegative integer
such that the partial sum of the $d_\ell$ top singular values is at
least 90\% of the total sum. Figure~\ref{figboxplot} shows that the
affinities are higher than $0.75$ for both trials.

%

\section{\texorpdfstring{Discussion and open problems.}{Discussion and open problems}}\label{discussion}

In this paper, we have developed a tractable algorithm that can
provably cluster data points in a fairly challenging regime in which
subspaces can overlap along many dimensions and in which the number of
points per subspace is rather limited.
%
\begin{figure}[t]

\includegraphics[scale=0.97]{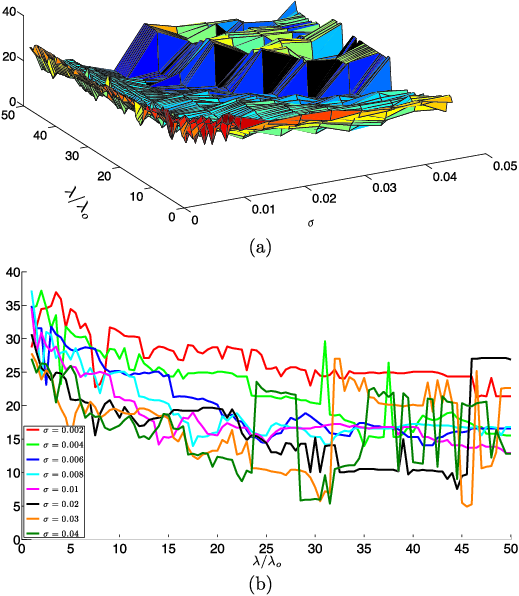}

\caption{Clustering error (\%) for different values of $\lambda$
and $\sigma$ on trial $5$ using the corrected Dantzig selector. \textup{(a)}~3D
plot (minimum clustering error appears
in red). \textup{(b)}~2D cross sections.}\label{figImDan}\vspace*{-3pt}
\end{figure}
%
\begin{table}[b]
\caption{Minimum clustering error}\label{table1}
\tabcolsep=12pt
\begin{tabular*}{\tablewidth}{@{\extracolsep{\fill}}@{}lcc cc @{}}
\hline
& \textbf{Baseline algorithm} & \textbf{Two-step procedure} & \textbf{Corrected Dantzig selector}\\
\hline
Trial 2 & 17.06\%& \textbf{3.54\%} & 9.53\% \\
Trial 5 & 12.47\%& \textbf{4.35\%} & 4.92\% \\
\hline
\end{tabular*}
\end{table}
Our results about the
performance of the robust SSC algorithm are expressed in terms of
interpretable parameters. This is not a trivial achievement: one of
the challenges of the theory for subspace clustering is precisely that
performance depends on many different aspects of the problem such as
the dimension of the ambient space, the number of subspaces, their
dimensions, their relative orientations, the distribution of points
around each subspace, the noise level and so on.\vadjust{\goodbreak}
Nevertheless, these results only offer a starting point as our work
leaves open lots of questions, and at the same time, suggests topics
for future research. Before presenting the proofs, we would like to
close by listing a few questions colleagues may find of interest.
\begin{itemize}
\item We have shown that while having the affinities and sampling
densities near what is information theoretically possible, robust
versions of SSC that can accommodate noise levels $\sigma$ of order
one exist. It would be interesting to establish fundamental limits
relating the key parameters to the maximum allowable noise
level. What is the maximum allowable noise level for any algorithm
regardless of tractability?\vadjust{\goodbreak}

\item It would be interesting to extend the results of this paper to a
deterministic model where both the orientation of the subspaces and
the noiseless samples are nonrandom. We leave this to a future
publication.

\begin{table}[t]
\caption{Optimal parameters}\label{table2}
\tabcolsep=12pt
\begin{tabular*}{\tablewidth}{@{\extracolsep{\fill}}lcc c c@{}}
\hline
& \textbf{Baseline algorithm} & \textbf{Two-step procedure} & \textbf{Corrected Dantzig selector}\\
\hline
Trial 2 & $K=9$, $t=0.0769$& $\sigma=0.03$, $\lambda=1.25\lambda_o$ & $\sigma=0.004$, $\lambda=41.5\lambda_o$\\
Trial 5 & $K=6$, $t=0.0455$& $\sigma=0.01$, $\lambda=\lambda_o$ & $\sigma=0.03$, $\lambda=45.5\lambda_o$ \\
\hline
\end{tabular*}
\end{table}
%
\begin{figure}[b]

\includegraphics{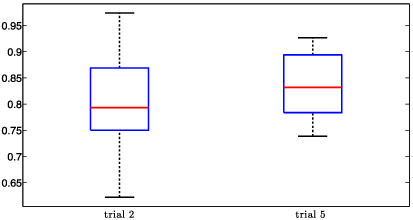}

\caption{Box plot of the affinities between subspaces for trials $2$
and $5$.}\label{figboxplot}
\end{figure}


\item Our work in this paper concerns the construction of the
similarity matrix and the correctness of sparse regression
techniques. The full algorithm then applies clustering techniques to
clean up errors introduced in the first step. It would be
interesting to develop theoretical guarantees for this step as
well. A potential approach is the interesting formulation developed
in~\cite{nina}.

\item We proposed a two-step procedure for robust subspace
clustering. The first step is used to estimate the required
regularization parameter for a LASSO problem. This is reminiscent of
estimating noise in sparse regularization and covariance estimation.
It would be interesting to design a joint optimization scheme to
simultaneous optimize the regularization parameter and the
regression coefficients. In recent years, there has been much
progress on this issue in the sparse regression literature; see
\cite{belloni2011square,giraud2012high,sun2012scaled,st2010l1,dalalyan2012fused} and references therein. It is an open research
direction to see whether any of these approaches can be applied to
automatically learn the regularization parameter when both the
response vector and covariates are corrupted and, in particular, for
the purpose of robust subspace clustering.\looseness=-1

\item A natural direction is the development of clustering techniques
that can provably operate with missing and/or sparsely corrupted
entries (the work \cite{ourSSC} only deals with grossly corrupted
columns). The work in \cite{highrankMC} provides one possible
approach but requires a very high sampling density as we already
mentioned. The paper \cite{SSCalg} develops another heuristic
approach without any theoretical justification.




\item{Our formulation uses a data-driven modeling approach by
regressing each data point against all others. As noted by Bittorf
et al. \cite{bittorf2012factoring}, this type of approach appears
in a number of other factorization problems. In particular,
\cite{arora2012computing} and recent variations
\cite{arora2012computing,bittorf2012factoring} use a convex
formulation very similar to SSC for the purpose of nonnegative
matrix factorizations. Exploring the connection between these
factorization problems is an interesting research direction.}

\item One of the advantages of the suggested scheme is that it is
highly parallelizable. When the algorithm is run sequentially, it
would be interesting to see whether one can reuse computations to
solve all the $\ell_1$-minimization problems more effectively.\vspace*{-3pt}
\end{itemize}

\section*{\texorpdfstring{Acknowledgements.}{Acknowledgements}}
We thank Ren\'e Vidal for helpful discussions as well as
Ery Arias-Castro, Rina Foygel and Lester Mackey for a careful reading
of the manuscript and insightful comments. We also thank the Associate
Editor and reviewers for constructive comments.
Emmanuel~J. Cand\`{e}s would like to thank Chiara Sabatti for invaluable feedback on an earlier version of
the paper. He also thanks the organizers of the 41st annual Meeting of
Dutch Statisticians and Probabilists held in November 2012 where these
results were presented. A brief \mbox{summary} of this work was submitted in
August 2012 and presented at the NIPS workshop on Deep Learning in
December 2012.\looseness=-1\vspace*{-3pt}

\begin{supplement}
\stitle{Supplement: Proofs}
\slink[doi]{10.1214/13-AOS1199SUPP} 
\sdatatype{.pdf}
\sfilename{AOS1199\_supp.pdf}
\sdescription{We prove all of the results of this paper.\vspace*{-3pt}}
\end{supplement}



%

\printaddresses

\end{document}